\documentclass[11pt]{article}
\usepackage[a4paper,margin=27mm]{geometry}
\usepackage{amsmath,amssymb,amsthm,mathtools,bm}
\usepackage{microtype}
\microtypesetup{expansion=false}
\usepackage{booktabs}
\usepackage{graphicx}
\usepackage{placeins}
\usepackage[hidelinks]{hyperref}
\usepackage[nameinlink,noabbrev]{cleveref}

\newtheorem{definition}{Definition}[section]
\newtheorem{proposition}[definition]{Proposition}
\newtheorem{theorem}[definition]{Theorem}
\newtheorem{corollary}[definition]{Corollary}
\newtheorem{remark}[definition]{Remark}

\newcommand{\hhat}[1]{\underline{\widehat{#1}}}
\newcommand{\hbreve}[1]{\underline{\breve{#1}}}
\newcommand{\dual}[1]{\underline{#1}}
\newcommand{\HU}{\hhat{U}}
\newcommand{\HSO}{\hhat{SO}_3}
\newcommand{\DQ}{\dual{U}}
\newcommand{\przero}{\operatorname{pr}_{0}}
\newcommand{\coef}[1]{[\varepsilon^{#1}]}
\newcommand{\Sclerp}{\operatorname{ScLERP}_{\!H}}
\newcommand{\Ad}{\operatorname{Ad}}
\newcommand{\Log}{\operatorname{Log}}

\title{Arbitrary-Order Hermite Interpolation of Rigid-Motion Jets\\
via Hyper-Multidual Quaternions}
\author{Daniel Condurache\\[0.6ex]
\small Technical University of Iasi, D. Mangeron Street no.~59, 700050 Iasi, Romania\\
\small Center for Industrial Robots Simulation and Testing, Technical University of Cluj-Napoca,\\
\small Memorandumului 14, 400114 Cluj-Napoca, Romania\\
\small \texttt{daniel.condurache@tuiasi.ro}}
\date{\today}

\begin{document}
\maketitle

\begin{abstract}
We study bilateral interpolation of finite-order rigid-motion jets represented
by unit dual quaternions.  An order-$n$ multidual (MD) algebra is the truncated
polynomial algebra $\mathbb R[\varepsilon]/(\varepsilon^{n+1})$;
hyper-multidual (HMD) quaternions are dual quaternions with coefficients in
this algebra.  Temporal HMD transforms encode a pose and its derivatives,
whereas a generic HMD curve need not be the temporal jet of its pose
projection; we call this requirement holonomicity.  We show that a temporal
transform and its relative descriptor are unitary and derive recursive
coefficient constraints, together with a local realizability converse in an
admissible logarithm chart.  We then extend screw linear interpolation
(ScLERP) algebraically to unit HMD quaternions.  Although it matches complete
endpoint transforms, direct HMD--ScLERP is generically non-holonomic for
arbitrary endpoint jets.  We give a coefficient criterion and explicit
endpoint and first-order interior contact defects.  A holonomic alternative
is obtained by mapping endpoint transforms to logarithmic dual-quaternion
coordinates, applying the degree-$(2n+1)$ Hermite polynomial that matches
derivatives through order $n$, and lifting by the exponential.  HMD arithmetic
also recovers higher-order rigid-motion acceleration fields without explicit
differentiation of $\mathrm{dexp}$.  Rotation and full $SE(3)$ tests through
second order, with an additional third-order polynomial check, reproduce the
stated defects and endpoint jets.
\end{abstract}

\section{Introduction}

Geometric interpolation of rotations and rigid motions has been developed in
several representations.  Park and Ravani construct invariant interpolation
schemes on the rotation group \cite{ParkRavani1997}; dual quaternions provide
a screw-theoretic representation of rigid displacement \cite{Selig2005}; and
the dual-quaternion power formula gives screw linear interpolation (ScLERP)
between two unit dual quaternions \cite{KavanEtAl2008}.  \v{Z}efran and Kumar study
rigid-motion interpolation with endpoint poses and prescribed endpoint twists
\cite{ZefranKumar1998}.  Higher derivatives of kinematic mappings and twists
are developed for lower-pair chains in \cite{Muller2014,Muller2019}.

The algebraic ingredients used here have distinct sources and scopes.
Messelmi develops multidual numbers and their associated multidual functions
\cite{Messelmi2015}, while hyper-dual arithmetic provides exact first- and
second-derivative evaluation \cite{FikeAlonso2012}.  The MD and HMD
differential transforms, orthogonal HMD tensors, unit HMD quaternions, and
higher-order rigid-body fields used in this paper are those introduced in
\cite{Condurache2025}.

The purpose of this paper is to answer a deceptively simple question: can two
unit HMD quaternions be interpolated by the usual ScLERP formula?  Algebraically
the answer is affirmative.  Kinematically, an additional issue appears.  A
generic curve in the HMD group need not be the differential transform of the
curve obtained by projecting it onto ordinary dual quaternions.  This is a
holonomicity, or contact, condition.  Confusing these two levels would identify a
generic $\hhat q$ with a differential transform $\hbreve q$ and would make
the endpoint interpolation statement stronger than justified.

The contributions of this paper are therefore:
\begin{enumerate}
  \item a direct ScLERP construction in the Lie group $\HU$ of unit HMD
  quaternions;
  \item an exact statement of what is interpolated when the endpoints are
  differential transforms;
  \item explicit unit-jet compatibility constraints obtained from the unit
  relative HMD descriptor $\hbreve q\dual q^*$;
  \item a necessary and sufficient coefficient criterion for holonomicity,
  endpoint defect formulas, and a generic non-holonomicity theorem;
  \item a degree-$(2n+1)$ HMD--Hermite correction that matches arbitrary
  compatible endpoint jets without explicit $\mathrm{dexp}$ differentiation;
  \item an algebraic recovery of higher-order acceleration fields from
  $\hbreve q\dual q^*$, requiring no explicit Lie-bracket expansion in the
  computational construction.
\end{enumerate}

The degree-$(2n{+}1)$ HMD--Hermite construction below matches arbitrary
bilateral endpoint jets, in finite algebraic form, at any prescribed finite
order $n\ge 1$.  Related constructions in the literature address important
lower-order cases or point-and-velocity data.  Kim, Kim, and Shin \cite{KimKimShin1995}
give Hermite quaternion curves in the rotation subgroup; Belta and Kumar
\cite{BeltaKumar2002} interpolate on $SE(3)$ with prescribed endpoint
velocities by SVD-based projection; logarithmic-coordinate Hermite
interpolation on Riemannian manifolds is developed for point and velocity
data in \cite{Zimmermann2020}; and a geometric construction producing $C^k$
curves from point and velocity data is given in \cite{RodriguesEtAl2005}.
Recent exponential-coordinate formulations on $SE(3)$ include
\cite{Muller2025}.  In the present construction, Lie brackets are absorbed
into noncommutative HMD products and are not generated term by term.

\section{Multidual differential algebra and hyper-multidual representations of rigid motion}

We follow the notation of \cite{Condurache2025}: a hat denotes an MD object,
an underline together with a hat denotes an HMD object, a breve denotes the MD
differential transform of a real time function, and an underlined breve denotes
the HMD differential transform of a dual-valued time function.  Thus
$\widehat x$ is MD, whereas $\hhat x$ is HMD; similarly, $\breve f$ is MD and
$\hbreve q$ is HMD.

\subsection{The multidual algebra}

Fix $n\geq1$ and let $\varepsilon$ be a nilpotent generator of order $n+1$.
The order-$n$ multidual algebra is the quotient
\begin{equation}
 \mathbb R_n=\mathbb R[\varepsilon]/\langle\varepsilon^{n+1}\rangle
 =\left\{a=\sum_{k=0}^{n}a_k\varepsilon^k:
 a_k\in\mathbb R\right\}.
 \label{eq:md-algebra}
\end{equation}
It is a commutative, associative, unital real algebra of dimension $n+1$.
For the algebraic theory of multidual numbers and the corresponding multidual
functions, see \cite{Messelmi2015}.
The ideal
\begin{equation}
 \mathfrak m=\varepsilon\mathbb R_n
 =\{a\in\mathbb R_n:a_0=0\}
 \label{eq:md-nilpotent-ideal}
\end{equation}
is nilpotent and $\mathbb R_n/\mathfrak m\simeq\mathbb R$.  Hence
$\mathbb R_n$ is a local algebra: $a$ is invertible if and only if its real
part $a_0$ is nonzero.

Multiplication is the truncated Cauchy product
\begin{equation}
 \left(\sum_{k=0}^na_k\varepsilon^k\right)
 \left(\sum_{k=0}^nb_k\varepsilon^k\right)
 =\sum_{k=0}^n\left(\sum_{j=0}^ka_jb_{k-j}\right)\varepsilon^k,
 \qquad
 \varepsilon^j\varepsilon^k=
 \begin{cases}\varepsilon^{j+k},&j+k\leq n,\\0,&j+k>n.
 \end{cases}
 \label{eq:md-product}
\end{equation}
The canonical projection $\pi_0:\mathbb R_n\to\mathbb R$ is
$\pi_0(a)=a_0$.

Let $f\in C^n(I,\mathbb R)$.  Its multidual prolongation is defined, for
$\widehat x=x+\Delta\widehat x$ with $x\in I$ and
$\Delta\widehat x\in\mathfrak m$, by the finite formula
\begin{equation}
 f(\widehat x)
 =\sum_{j=0}^{n}\frac{(\Delta\widehat x)^j}{j!}f^{(j)}(x).
 \label{eq:md-functional-calculus}
\end{equation}
No convergence assumption is involved because
$(\Delta\widehat x)^{n+1}=0$.

\begin{proposition}[Conservation of real functional identities]
\label{prop:md-identity-conservation}
Let $f,g,h\in C^n(I,\mathbb R)$ satisfy
\begin{equation}
 f(x)=g(x)h(x),\qquad x\in I.
 \label{eq:real-functional-identity}
\end{equation}
Then their multidual prolongations satisfy
\begin{equation}
 \boxed{f(\widehat x)=g(\widehat x)h(\widehat x)}
 \label{eq:md-identity-conservation}
\end{equation}
for every $\widehat x\in\mathbb R_n$ with $\pi_0(\widehat x)\in I$.
Addition, scalar multiplication, and admissible compositions of functional
identities are preserved analogously.
\end{proposition}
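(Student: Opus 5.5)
The plan is to compare the two sides of \eqref{eq:md-identity-conservation} coefficient by coefficient in powers of $\Delta\widehat x$, using the general Leibniz rule. Write $\widehat x=x+\Delta\widehat x$ with $x=\pi_0(\widehat x)\in I$ and $\Delta\widehat x\in\mathfrak m$. Since $\mathbb R_n$ is commutative and $(\Delta\widehat x)^{k}=0$ for $k>n$, the truncated Cauchy product \eqref{eq:md-product} gives
\begin{equation*}
 g(\widehat x)h(\widehat x)
 =\sum_{j=0}^{n}\sum_{l=0}^{n}\frac{(\Delta\widehat x)^{j+l}}{j!\,l!}g^{(j)}(x)h^{(l)}(x)
 =\sum_{k=0}^{n}\frac{(\Delta\widehat x)^{k}}{k!}\sum_{j=0}^{k}\binom{k}{j}g^{(j)}(x)h^{(k-j)}(x).
\end{equation*}
Here all terms with $j+l>n$ vanish. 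The surviving terms are regrouped by $k=j+l$, using $\frac{1}{j!\,l!}=\frac{1}{k!}\binom{k}{j}$.

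Next I would differentiate the identity \eqref{eq:real-functional-identity} $k$ times on $I$, for $k\le n$. This is legitimate because $g,h\in C^n$. The Leibniz rule then gives
\begin{equation*}
 f^{(k)}(x)=\sum_{j=0}^{k}\binom{k}{j}g^{(j)}(x)h^{(k-j)}(x).
\end{equation*}
Substituting this into the regrouped sum gives exactly the finite formula \eqref{eq:md-functional-calculus} for $f(\widehat x)$, which proves the boxed identity.

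The remaining assertions follow the same pattern. Additivity and scalar multiplication are immediate, since prolongation is linear in $f$. For compositions $f=g\circ h$ I would use the identical coefficient-matching argument with Fa\`a di Bruno's formula in place of Leibniz. An equivalent and cleaner route is to observe that $\Delta\widehat{y}:=h(\widehat x)-h(x)\in\mathfrak m$. One can then expand $g(h(x)+\Delta\widehat y)$ by \eqref{eq:md-functional-calculus} and recognize the result as the order-$n$ Taylor polynomial of $g\circ h$ evaluated at the nilpotent increment. In other words, $f(\widehat x)$ is the truncated Taylor polynomial $T_n f(x;\Delta\widehat x)$, and truncated Taylor polynomials compose and multiply modulo degree $n+1$.

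The main obstacle is the composition case. There one must make sure that only derivatives of order at most $n$ enter, so that $C^n$ regularity suffices without invoking analyticity. The cleanest way to handle this is to argue at the level of jets: the map $f\mapsto T_nf(x;\cdot)\in\mathbb R[t]/(t^{n+1})$ is an algebra homomorphism compatible with composition. That fact is the standard jet chain rule for $C^n$ functions.
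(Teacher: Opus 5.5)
Your proof is correct and follows the paper's argument: differentiate $f=gh$ through order $n$ by the Leibniz rule and match the result against the truncated Cauchy product of the two prolongations, which you carry out explicitly by regrouping the double sum with $\frac{1}{j!\,l!}=\frac{1}{k!}\binom{k}{j}$. For compositions, your use of the $C^n$ jet chain rule (Fa\`a di Bruno), followed by evaluation at the nilpotent increment $\Delta\widehat x$, supplies the detail the paper leaves at ``analogously.''
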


\begin{proof}
Differentiate \eqref{eq:real-functional-identity} through order $n$, use the
Leibniz formula, and substitute the resulting derivatives in
\eqref{eq:md-functional-calculus}.  The coefficients agree term by term with
the truncated Cauchy product \eqref{eq:md-product}.
\end{proof}

\subsection{The multidual differential transform}

Let $A$ be a finite-dimensional real associative algebra and
$f:I\to A$ a $C^n$ curve.  Its order-$n$ multidual differential transform is
\begin{equation}
 \mathcal T_n[f](t)=e^{\varepsilon D_t}f(t)
 =\sum_{k=0}^{n}\frac{\varepsilon^k}{k!}f^{(k)}(t)
 \in A\otimes_{\mathbb R}\mathbb R_n.
 \label{eq:differential-transform-general}
\end{equation}
Thus $\mathcal T_n[f](t)$ is the truncated Taylor shift $f(t+\varepsilon)$.
The factorials belong to the differential transform, not to the definition of
a generic multidual number.  Combined with the Cauchy product, they encode the
Leibniz rule.

\begin{proposition}[Properties of the MD transform]
For compatible $C^n$ functions of time $f=f(t)$ and $g=g(t)$, and a constant
$c$,
\begin{align}
 \mathcal T_n[f+g]&=\mathcal T_n[f]+\mathcal T_n[g],
 &\mathcal T_n[cf]&=c\mathcal T_n[f],\notag\\
 \mathcal T_n[fg]&=\mathcal T_n[f]\mathcal T_n[g],
 &\mathcal T_n[f^{-1}]&=\mathcal T_n[f]^{-1},
 \label{eq:md-transform-algebra}
\end{align}
where the inverse identity holds exactly when the zeroth coefficient $f(t)$ is
invertible in $A$ (equivalently, when $\mathcal T_n[f](t)$ is invertible).
If $f$ is scalar-valued, the substitution property below holds for every
$F\in C^n$ for which $F\circ f$ is defined. More generally, when $f$ takes
values in an associative algebra $A$, assume that $F$ is a polynomial or an
analytic function defined by a local functional calculus on a neighborhood
containing the range of $f$. Then
\begin{equation}
 \mathcal T_n[F\circ f](t)=F\!\left(\mathcal T_n[f](t)\right)
 \label{eq:md-transform-composition}
\end{equation}
holds, where the right-hand side is the MD prolongation of that same local
functional calculus. For multivalued functions, in particular $\Log$, one
admissible branch is fixed on the entire neighborhood under consideration.
Consequently, functional identities of time functions are transformed
according to Proposition~\ref{prop:md-identity-conservation}.
\end{proposition}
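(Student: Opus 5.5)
Linearity is immediate: $\mathcal T_n$ is defined coefficientwise by $f\mapsto\sum_k\varepsilon^k f^{(k)}/k!$, and differentiation is linear.

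The product rule is the core step. Using the general Leibniz formula in the associative algebra $A$,
\[
 \frac{(fg)^{(k)}}{k!}=\sum_{j=0}^{k}\frac{f^{(j)}}{j!}\,\frac{g^{(k-j)}}{(k-j)!},
\]
which holds without commutativity because only the order of the factors $f$ and $g$ matters and it is preserved. Comparing with the truncated Cauchy product \eqref{eq:md-product}, extended to $A\otimes\mathbb R_n$ where $\varepsilon$ is central, the coefficients of $\varepsilon^k$ agree for every $k\le n$. This is exactly where the factorials of \eqref{eq:differential-transform-general} earn their place.

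For the inverse, first note that an element $\sum a_k\varepsilon^k\in A\otimes\mathbb R_n$ is invertible if and only if $a_0$ is invertible in $A$. Writing it as $a_0(1+a_0^{-1}N)$ with $N$ nilpotent, the finite Neumann series gives an inverse. Conversely, projecting a relation $xy=1$ to order zero gives $a_0y_0=1$, and finite-dimensionality makes one-sided inverses two-sided. If $f(t)$ is invertible, $f^{-1}$ is $C^n$ because inversion is smooth (a rational map in the entries). Applying the product rule to $ff^{-1}=1$ with $\mathcal T_n[1]=1$ yields $\mathcal T_n[f]\mathcal T_n[f^{-1}]=1$, and similarly on the other side.

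For composition with scalar $f$, set $\Delta=\mathcal T_n[f](t)-f(t)=\sum_{k\ge1}\varepsilon^k f^{(k)}/k!\in\mathfrak m$. The right side of \eqref{eq:md-transform-composition} is then $\sum_j \Delta^j F^{(j)}(f(t))/j!$ by \eqref{eq:md-functional-calculus}. Equality with the Taylor coefficients of $F\circ f$ is Faà di Bruno's formula. I would avoid grinding that formula by arguing that both sides equal the order-$n$ truncation of $F(f(t+s))$ in $s$ at $s=\varepsilon$: Taylor-expand $f(t+s)=f(t)+\Delta(s)$, then expand $F$ around $f(t)$ to order $n$. Truncation commutes because $\Delta(s)=O(s)$.

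For algebra-valued $f$ and polynomial $F$, the result follows by induction from the product and linearity rules. For analytic $F$ given by a convergent power series (or a Cauchy-integral functional calculus), I would apply the polynomial case to partial sums. Each $\varepsilon$-coefficient is a finite expression in $f,\dots,f^{(n)}$, so the coefficients converge locally uniformly together with their $t$-derivatives up to order $n$. This lets derivatives pass through the limit. For the resolvent form, the inverse rule gives $\mathcal T_n[(z-f)^{-1}]=(z-\mathcal T_n[f])^{-1}$, and one integrates over $z$.

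I expect this analytic algebra-valued case to be the main obstacle, specifically justifying the interchange of limit and differentiation and checking that the MD prolongation of the functional calculus is well defined. The spectrum of $\mathcal T_n[f]$ equals that of $f(t)$, since the nilpotent part does not change invertibility by the criterion above, so the same contour or branch works. For $\Log$, the fixed branch keeps every term on a single analytic sheet. The final sentence of the statement then follows by combining these rules with Proposition~\ref{prop:md-identity-conservation}.
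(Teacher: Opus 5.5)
Your proposal is correct and follows the same route as the paper. The steps match one for one: linearity from the definition, the Leibniz formula matched against the truncated Cauchy product, the inverse rule from multiplicativity applied to $ff^{-1}=1$, the scalar substitution rule via the order-$n$ chain rule (your Taylor-uniqueness argument is a clean way of packaging it), monomials for polynomial $F$, and the chosen local functional calculus for analytic $F$.

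Your only additions are details the paper leaves implicit. You prove that $\mathcal T_n[f](t)$ is invertible if and only if $f(t)$ is, and you justify the analytic and $\Log$ cases through partial sums, or the resolvent with the inverse rule and invariance of the spectrum, where the paper simply asserts them.
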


\begin{proof}
Linearity follows from \eqref{eq:differential-transform-general}.  By the
Leibniz formula,
$(fg)^{(k)}=\sum_{j=0}^k\binom{k}{j}f^{(j)}g^{(k-j)}$.  Since
$\binom{k}{j}/k!=1/(j!(k-j)!)$, this is exactly the coefficient obtained from
the Cauchy product \eqref{eq:md-product}.  Applying multiplicativity to
$ff^{-1}=1$ gives the inverse identity. For scalar-valued $f$, the composition
identity follows by applying the finite prolongation formula
\eqref{eq:md-functional-calculus} to the temporal jet
$\mathcal T_n[f](t)=f(t)+\Delta\widehat f(t)$ and using the ordinary chain
rule through order $n$. For algebra-valued $f$ and polynomial $F$, it follows
from linearity and multiplicativity applied to every monomial of $F$. The
analytic case follows from the same identity in the chosen local functional
calculus; fixing one branch makes the argument single-valued for $\Log$.
\end{proof}

\subsection{Dualization of the multidual algebra}

Introduce a new nilpotent generator $\varepsilon_0$, independent of and
commuting with $\varepsilon$, with $\varepsilon_0^2=0$.  The dualization of
$\mathbb R_n$ is
\begin{equation}
 \hhat{\mathbb R}_n
 =\mathbb R_n[\varepsilon_0]/\langle\varepsilon_0^2\rangle
 =\mathbb R_n\oplus\varepsilon_0\mathbb R_n
 \simeq\mathbb D\otimes_{\mathbb R}\mathbb R_n,
 \label{eq:hmd-dualization}
\end{equation}
where $\mathbb D=\mathbb R[\varepsilon_0]/\langle\varepsilon_0^2\rangle$ is
the ordinary dual-number algebra.  An HMD number therefore has either of the
equivalent forms
\begin{equation}
 \hhat a=\widehat a+\varepsilon_0\widehat a_0,
 \quad \widehat a,\widehat a_0\in\mathbb R_n,
 \qquad
 \hhat a=\sum_{k=0}^n
 (a_k+\varepsilon_0a_{0k})\varepsilon^k.
 \label{eq:hmd-number}
\end{equation}
Its product is obtained by dualization:
\begin{equation}
 (\widehat a+\varepsilon_0\widehat a_0)
 (\widehat b+\varepsilon_0\widehat b_0)
 =\widehat a\widehat b
 +\varepsilon_0(\widehat a\widehat b_0+\widehat a_0\widehat b),
 \label{eq:hmd-dual-product}
\end{equation}
where every product on the right is the multidual product
\eqref{eq:md-product}.  Thus the two nilpotencies have different roles:
$\varepsilon$ records derivatives through order $n$, whereas
$\varepsilon_0$ performs the dualization used to represent rigid displacement.

The MD transform extends coefficientwise to a dual-valued curve
$\alpha(t)=a(t)+\varepsilon_0a_0(t)$:
\begin{equation}
 \hbreve\alpha(t)=\mathcal T_n[\alpha](t)
 =\mathcal T_n[a](t)+\varepsilon_0\mathcal T_n[a_0](t)
 =\sum_{k=0}^n\frac{\varepsilon^k}{k!}\alpha^{(k)}(t).
 \label{eq:hmd-differential-transform}
\end{equation}
All identities in \eqref{eq:md-transform-algebra} remain valid after
dualization.

More generally, let $f,g,h$ be $C^n$ dual functions satisfying the dual
functional identity
\begin{equation}
 f(\dual x)=g(\dual x)h(\dual x).
 \label{eq:dual-functional-identity}
\end{equation}
For an HMD argument $\hhat x=\dual x+\Delta\hhat x$, with dual part
$\dual x\in\mathbb D$ and multidual part
$\Delta\hhat x\in\varepsilon\hhat{\mathbb R}_n$, define
\begin{equation}
 f(\hhat x)
 =\sum_{j=0}^n\frac{(\Delta\hhat x)^j}{j!}f^{(j)}(\dual x).
 \label{eq:hmd-function-prolongation}
\end{equation}
The dualized form of Proposition~\ref{prop:md-identity-conservation} gives
\begin{equation}
 \boxed{f(\hhat x)=g(\hhat x)h(\hhat x).}
 \label{eq:hmd-identity-conservation}
\end{equation}
Thus identities of real functions are conserved by MD prolongation, and
identities of dual functions are conserved by HMD prolongation.  The breve
accent is not used for these prolongations; it is reserved below for
differential transforms of functions of time.

\subsection{HMD tensors and the orthogonal HMD group}

Let $\hhat V_3=(\hhat{\mathbb R}_n)^3$ and denote by
$\mathcal L(\hhat V_3)$ the algebra of HMD linear tensors.  Transposition,
determinant, and analytic matrix functions are obtained by extending their
ordinary algebraic formulas over the commutative coefficient ring
$\hhat{\mathbb R}_n$.  The proper orthogonal HMD tensors form
\begin{equation}
 \HSO=\{\hhat R\in\mathcal L(\hhat V_3):
 \hhat R\hhat R^T=I,\ \det\hhat R=1\}.
 \label{eq:hmd-so3}
\end{equation}

Let $\dual{SO}_3$ denote the group of proper orthogonal dual tensors used
to represent rigid displacements.  For a $C^n$ motion
$\dual R:I\to \dual{SO}_3$, define
\begin{equation}
 \hbreve R(t)=\mathcal T_n[\dual R](t)
 =e^{\varepsilon D_t}\dual R(t)
 =\sum_{k=0}^n\frac{\varepsilon^k}{k!}\dual R^{(k)}(t).
 \label{eq:tensor-differential-transform}
\end{equation}

\begin{proposition}[Preservation of HMD orthogonality]
\label{prop:hmd-orthogonal-preservation}
For every $C^n$ curve $\dual R(t)\in \dual{SO}_3$,
\begin{equation}
 \hbreve R\hbreve R^T=I,
 \qquad \det\hbreve R=1.
 \label{eq:hmd-tensor-orthogonality}
\end{equation}
Consequently, $\hbreve R(t)\in\HSO$.
\end{proposition}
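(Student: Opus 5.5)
The plan is to apply the algebraic properties of the MD transform in \eqref{eq:md-transform-algebra}, in their dualized form, to the two identities that define $\dual{SO}_3$. Throughout, I treat $\dual R(t)$ as a curve in the finite-dimensional real associative algebra $A=\mathcal L(\mathbb D^3)$ of dual $3\times3$ matrices. Its transform $\hbreve R=\mathcal T_n[\dual R]$ then lies in $A\otimes_{\mathbb R}\mathbb R_n\simeq\mathcal L(\hhat V_3)$. Since $\mathbb D$ and $\mathbb R_n$ are commutative, this identification respects matrix multiplication, transposition and determinant.

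\textbf{Step 1: orthogonality.} Transposition is a real-linear map that commutes with $D_t$. Directly from \eqref{eq:differential-transform-general},
\[
\mathcal T_n[\dual R^T]=\sum_{k=0}^n\tfrac{\varepsilon^k}{k!}(\dual R^{(k)})^T=\hbreve R^T .
\]
Multiplicativity in \eqref{eq:md-transform-algebra} then gives
\[
\hbreve R\hbreve R^T=\mathcal T_n[\dual R]\,\mathcal T_n[\dual R^T]=\mathcal T_n[\dual R\dual R^T]=\mathcal T_n[I]=I .
\]
The last equality holds because the transform of a constant curve has no higher coefficients.

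\textbf{Step 2: determinant.} The determinant is a polynomial in the matrix entries, which lie in the commutative ring $\mathbb D$. Each entry's transform is its coefficientwise Taylor shift. Applying multiplicativity and linearity monomial by monomial, in the scalar (dual-valued) case, gives
\[
\det\hbreve R=\mathcal T_n[\det\dual R].
\]
Equivalently, this is the substitution property \eqref{eq:md-transform-composition} with the polynomial $F=\det$. Since $\det\dual R(t)\equiv1$, its transform is $1$.

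Membership $\hbreve R(t)\in\HSO$ then follows from definition \eqref{eq:hmd-so3}. The only step needing care is to confirm that transposition and the determinant over $\hhat{\mathbb R}_n$, which are defined by extending the ordinary formulas, agree with the coefficientwise transform. Both verifications reduce to entrywise linearity together with the Cauchy-product Leibniz identity, so I expect no real obstacle there. It is worth noting that $C^n$ regularity of $\dual R$ is exactly what is needed for all the coefficients $\dual R^{(k)}$, $k\le n$, to exist.
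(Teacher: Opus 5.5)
Your proposal is correct and follows essentially the same route as the paper: apply multiplicativity of the transform to $\dual R\dual R^T=I$, using $\mathcal T_n[\dual R^T]=\hbreve R^T$ because transposition commutes with differentiation. Then use the polynomial functional property to get $\det\hbreve R=\mathcal T_n[\det\dual R]=1$; your entrywise check that the identification with $\mathcal L(\hhat V_3)$ respects products, transposition and determinant spells out what the paper leaves implicit.
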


\begin{proof}
Apply the multiplicative differential transform to $\dual R\dual R^T=I$.  Since
transposition commutes with differentiation,
$\mathcal T_n[\dual R^T]=\hbreve R^T$, which proves the first identity.  Applying the
functional property to the determinant gives
$\det\hbreve R=\mathcal T_n[\det\dual R]=1$.
\end{proof}

The product in \eqref{eq:hmd-tensor-orthogonality} must be distinguished from
the tensor that identifies the higher-order kinematic fields.  With
$\dual R(t)=\przero\hbreve R(t)$ evaluated at the same physical time, define
\begin{equation}
 \boxed{\hhat\Psi=\hbreve R\dual R^T.}
 \label{eq:tensor-field-descriptor}
\end{equation}
Here only the first factor is differentially transformed.  The HMD orthogonal
tensor $\hhat\Psi$ admits the decomposition
\begin{equation}
 \hhat\Psi
 =\left(I+\varepsilon_0\widehat a^{\times}\right)\widehat\Phi,
 \label{eq:tensor-field-decomposition}
\end{equation}
where $v^{\times}w=v\times w$, $\widehat\Phi$ is the multidual tensorial part,
and $\widehat a$ is the multidual vectorial part.  They have expansions
\begin{equation}
 \widehat\Phi=I+\sum_{k=1}^n\frac{\varepsilon^k}{k!}\Phi_k,
 \qquad
 \widehat a=\sum_{k=1}^n\frac{\varepsilon^k}{k!}a_k.
 \label{eq:tensor-field-expansions}
\end{equation}
For a point $r$ fixed in the body, the complete higher-order field is
\begin{equation}
 \boxed{\widehat a_r=\widehat a+\widehat\Phi r.}
 \label{eq:tensor-point-field}
\end{equation}
Thus the coefficients of $\widehat a_r$ simultaneously give velocity,
acceleration, jerk, and all orders retained by the algebra
\cite{Condurache2025}.

\begin{remark}[Two distinct tensor products]
The identities
\begin{equation}
 \hbreve R\hbreve R^T=I,
 \qquad
 \hbreve R\dual R^T=\hhat\Psi
 \label{eq:two-tensor-products}
\end{equation}
have different meanings.  The first expresses membership in $\HSO$; the
second is the relative HMD tensor that carries the higher-order kinematic
fields.
\end{remark}

\subsection{HMD quaternions, unit quaternions, and the tensor covering}

Let $\hhat V_3=(\hhat{\mathbb R}_n)^3$.  The HMD quaternion algebra is
\begin{equation}
 \hhat{\mathbb Q}_n
 =\hhat{\mathbb R}_n\oplus\hhat V_3,
 \qquad
 \hhat q=(\hhat q_s,\hhat{\bm q}).
 \label{eq:hmd-quaternion}
\end{equation}
For $\hhat p=(\hhat p_s,\hhat{\bm p})$ and
$\hhat q=(\hhat q_s,\hhat{\bm q})$, define
\begin{equation}
 \hhat p\hhat q
 =\left(
  \hhat p_s\hhat q_s-\hhat{\bm p}\cdot\hhat{\bm q},
  \hhat p_s\hhat{\bm q}+\hhat q_s\hhat{\bm p}
  +\hhat{\bm p}\times\hhat{\bm q}
 \right).
 \label{eq:hmd-quaternion-product}
\end{equation}
The algebra is associative and noncommutative.  Quaternion conjugation acts on
the vector part only:
\begin{equation}
 \hhat q^*=(\hhat q_s,-\hhat{\bm q}),
 \qquad |\hhat q|^2=\hhat q\hhat q^*.
 \label{eq:hmd-conjugate}
\end{equation}
In particular, both $\varepsilon$ and $\varepsilon_0$ are unchanged by
quaternion conjugation.

The unit HMD quaternions form the Lie group
\begin{equation}
 \HU=\{\hhat q\in\hhat{\mathbb Q}_n:
             \hhat q\hhat q^*=1\}.
 \label{eq:hmd-unit-group}
\end{equation}
Its Lie algebra is identified with the pure-vector HMD quaternions.  Locally,
\begin{equation}
 \hhat q=\exp\!\left(\frac12\hhat\alpha\hhat u\right)
 =\cos\frac{\hhat\alpha}{2}
  +\hhat u\sin\frac{\hhat\alpha}{2},
 \qquad \hhat u\cdot\hhat u=1.
 \label{eq:hmd-polar}
\end{equation}
The exponential and a locally selected logarithm are evaluated by the finite
functional calculus inherited first from $\mathbb R_n$ and then by
dualization \cite{Condurache2025}.

\begin{definition}[Admissible logarithm chart]
An admissible logarithm chart is an open set $\dual{\mathcal U}_{\Log}\subset\DQ$
on which a single smooth branch
$\Log:\dual{\mathcal U}_{\Log}\to\dual{\mathfrak u}$ has been fixed.  Quaternion
representatives are first chosen consistently modulo the sign ambiguity, and
every curve to which $\Log$ is applied is required to remain in
$\dual{\mathcal U}_{\Log}$.  In particular, the rotation cut locus (including the
half-turn ambiguity) is excluded from the selected chart.  The Lie algebra
$\dual{\mathfrak u}$ consists of pure dual quaternions: both the real and dual
parts of any $\dual\xi\in\dual{\mathfrak u}$ have zero scalar coefficient.
\label{def:log-chart}
\end{definition}

The quaternion and tensor descriptions are related by the surjective Lie-group
homomorphism
\begin{equation}
 \Theta:\HU\longrightarrow\HSO,
 \qquad
 \Theta(\hhat q)
 =I+2\hhat q_s\hhat{\bm q}^{\times}
   +2(\hhat{\bm q}^{\times})^2.
 \label{eq:hmd-covering-map}
\end{equation}
It satisfies
\begin{equation}
 \Theta(\hhat p\hhat q)
 =\Theta(\hhat p)\Theta(\hhat q),
 \qquad
 \Theta(\hhat q^*)=\Theta(\hhat q)^T.
 \label{eq:hmd-covering-properties}
\end{equation}

Let $\DQ$ denote the ordinary unit dual-quaternion group.  For a $C^n$ curve
$\dual q:I\to\DQ$, its HMD differential transform is
\begin{equation}
 \hbreve q(t)=\mathcal T_n[\dual q](t)
 =e^{\varepsilon D_t}\dual q(t)
 =\sum_{k=0}^{n}\frac{\varepsilon^k}{k!}\dual q^{(k)}(t).
 \label{eq:differential-transform}
\end{equation}
The transform respects quaternion multiplication, conjugation, inverse, and
finite functional prolongation.  If $\dual q(t)$ remains in one admissible logarithm
chart, then
\begin{equation}
 \mathcal T_n[\dual q^*]=\mathcal T_n[\dual q]^*,
 \quad
 \mathcal T_n[\exp\dual X]=\exp(\mathcal T_n[\dual X]),
 \quad
 \mathcal T_n[\Log\dual q]=\Log(\mathcal T_n[\dual q]).
 \label{eq:transform-exp-log}
\end{equation}
If $\dual q(t)\dual q(t)^*=1$, multiplicativity gives
\begin{equation}
 \hbreve q(t)\hbreve q(t)^*=\mathcal T_n[\dual q\dual q^*](t)=1,
 \label{eq:transform-unitarity}
\end{equation}
and hence $\hbreve q(t)\in\HU$.

\begin{proposition}[Unit temporal transform and unit relative descriptor]
\label{prop:unit-hmd-jet-constraints}
Let $\dual q:I\to\DQ$ be $C^n$ and write
\begin{equation}
 \hbreve q
 =\sum_{r=0}^{n}\frac{\varepsilon^r}{r!}\dual q^{(r)}.
 \label{eq:unit-jet-expansion}
\end{equation}
Both $\hbreve q$ and the relative HMD quaternion
\begin{equation}
 \hhat\varphi
 :=\hbreve q\dual q^*
 =\sum_{r=0}^{n}\frac{\varepsilon^r}{r!}\dual\varphi_r,
 \qquad
 \dual\varphi_r:=\dual q^{(r)}\dual q^*,
 \quad \dual\varphi_0=1,
 \label{eq:relative-descriptor-expansion}
\end{equation}
are unitary.  Consequently, their relative coefficients satisfy
\begin{equation}
 \boxed{
 \sum_{k=0}^{r}\binom{r}{k}
 \dual\varphi_k\dual\varphi_{r-k}^{*}=0,
 \qquad r=1,\ldots,n.}
 \label{eq:relative-descriptor-constraints}
\end{equation}
Equivalently, for $r\geq1$,
\begin{equation}
 \boxed{
 \dual\varphi_r+\dual\varphi_r^*
 =-\sum_{k=1}^{r-1}\binom{r}{k}
 \dual\varphi_k\dual\varphi_{r-k}^{*}.}
 \label{eq:relative-descriptor-recursion}
\end{equation}
Thus the conjugate-symmetric part of $\dual\varphi_r$, and hence the
corresponding constrained part of $\dual q^{(r)}=\dual\varphi_r\dual q$, is
determined by lower-order derivatives.
Conversely, let $\hhat p\in\HU$ have base (coefficient independent of
$\varepsilon$) $\dual p_0$, and suppose that
$\dual p_0^*\hhat p$ belongs to the HMD prolongation of an admissible
logarithm chart.  Then $\hhat p$ is locally the order-$n$ temporal transform
of a smooth unit dual-quaternion curve through $\dual p_0$.
\end{proposition}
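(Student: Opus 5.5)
The plan has three parts: unitarity of $\hbreve q$ and $\hhat\varphi$, extraction of the coefficient constraints, and the local converse.

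\textbf{Unitarity.} Unitarity of $\hbreve q$ is exactly \eqref{eq:transform-unitarity}: apply the multiplicative transform to $\dual q\dual q^*=1$, using $\mathcal T_n[\dual q^*]=\hbreve q^*$ from \eqref{eq:transform-exp-log}. For $\hhat\varphi$, note that $\dual q$ is a constant, $\varepsilon$-free element of $\HU$, since $\dual q\dual q^*=1$. Quaternion conjugation reverses products and leaves $\varepsilon$ and $\varepsilon_0$ fixed. Hence
\[
\hhat\varphi\hhat\varphi^*=\hbreve q\,\dual q^*\dual q\,\hbreve q^*=\hbreve q\hbreve q^*=1,
\]
so $\hhat\varphi\in\HU$ as a product of unit elements.

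\textbf{Coefficient constraints.} Expand $\hhat\varphi\hhat\varphi^*$ with the truncated Cauchy product \eqref{eq:md-product}. Since conjugation acts coefficientwise in $\varepsilon$, we have $\hhat\varphi^*=\sum_r \frac{\varepsilon^r}{r!}\dual\varphi_r^*$. The coefficient of $\varepsilon^r$ is therefore $\sum_{k}\frac{1}{k!(r-k)!}\dual\varphi_k\dual\varphi_{r-k}^*$. Multiplying by $r!$ and equating to the coefficients of $1$ (zero for $r\ge1$) gives \eqref{eq:relative-descriptor-constraints}. The identity $\dual\varphi_0=\dual q\dual q^*=1$ lets us isolate the $k=0$ and $k=r$ terms as $\dual\varphi_r^*+\dual\varphi_r$, which yields \eqref{eq:relative-descriptor-recursion}.

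For the remark about $\dual q^{(r)}$: the relation $\dual q^{(r)}=\dual\varphi_r\dual q$ follows from $\dual q^*\dual q=1$. The conjugate-symmetric part (the scalar part, over $\mathbb D$) of $\dual\varphi_r$ is fixed by lower orders, so the corresponding component of $\dual q^{(r)}$ is constrained. The free part is pure, i.e. in $\dual{\mathfrak u}$.

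\textbf{Converse.} This is the main step. Set $\hhat\varphi':=\dual p_0^*\hhat p$. It is unit, has base $1$, and by hypothesis lies in the HMD prolongation of the admissible chart. Define $\hhat\xi:=\Log\hhat\varphi'$ using the fixed branch. Two facts are needed:
\begin{itemize}
  \item $\hhat\xi$ is a pure HMD quaternion with base $\Log 1=0$. This follows because the functional calculus preserves the identity $\exp\Log=\mathrm{id}$ and purity on unit elements, by the identity-conservation \eqref{eq:hmd-identity-conservation}.
  \item $\exp\hhat\xi=\hhat\varphi'$.
\end{itemize}

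Write $\hhat\xi=\sum_{k=1}^n\frac{\varepsilon^k}{k!}\dual\xi_k$ with $\dual\xi_k\in\dual{\mathfrak u}$, and define the polynomial curve
\[
\dual X(t)=\sum_{k=1}^n\frac{(t-t_0)^k}{k!}\dual\xi_k\in\dual{\mathfrak u}.
\]
Then $\mathcal T_n[\dual X](t_0)=\hhat\xi$ by construction. Now set $\dual q(t)=\dual p_0\exp\dual X(t)$. This is smooth and unit, stays near $\dual p_0$ for $t$ near $t_0$, and satisfies $\dual q(t_0)=\dual p_0$. By multiplicativity and \eqref{eq:transform-exp-log},
\[
\mathcal T_n[\dual q](t_0)=\dual p_0\exp(\hhat\xi)=\dual p_0\hhat\varphi'=\hhat p,
\]
which completes the converse.

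The obstacle I expect is justifying that the chosen $\Log$ branch, prolonged to HMD arguments, really inverts $\exp$ on the prolonged chart and returns pure elements. The plan is to deduce this from the real/dual identities $\exp(\Log\dual q)=\dual q$ and $\Log(\dual q)^*=-\Log\dual q$, holding on the chart, via conservation of functional identities. The chart hypothesis on $\dual p_0^*\hhat p$ is used exactly here, so that a single branch applies.
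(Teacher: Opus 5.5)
Your proposal is correct and follows essentially the same route as the paper. It proves unitarity of $\hhat\varphi$ from $\hbreve q\,\dual q^*\dual q\,\hbreve q^*=\hbreve q\hbreve q^*=1$, extracts the $\varepsilon^r/r!$ coefficients and isolates the $k=0,r$ terms via $\dual\varphi_0=1$, and for the converse realizes $\hhat p$ by $\dual p_0\exp\bigl(\sum_{k=1}^n \frac{(t-t_0)^k}{k!}\dual\xi_k\bigr)$, built from the finite logarithm of $\dual p_0^*\hhat p$ (the paper centres at $\tau=0$ instead of $t_0$); the step you flag as the obstacle, that the prolonged $\Log$ returns a pure element with zero base and inverts $\exp$ on unit HMD arguments in the chart, is one the paper simply asserts in a single line.
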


\begin{proof}
The first unitarity identity is
$\hbreve q\hbreve q^*=\mathcal T_n[\dual q\dual q^*]=1$.  Since
$\dual q^*\dual q=1$, the second follows from
\begin{equation}
 \hhat\varphi\hhat\varphi^*
 =\hbreve q\dual q^*\dual q\hbreve q^*
 =\hbreve q\hbreve q^*=1.
 \label{eq:relative-descriptor-unitarity}
\end{equation}
Extracting the coefficient of $\varepsilon^r/r!$ in this identity gives
\eqref{eq:relative-descriptor-constraints}.  Separating its terms with
$k=0$ and $k=r$, and using $\dual\varphi_0=1$, gives
\eqref{eq:relative-descriptor-recursion}.

For the converse, write the finite HMD logarithm as
\begin{equation}
 \hhat X=\Log(\dual p_0^*\hhat p)
 =\sum_{r=1}^{n}\frac{\varepsilon^r}{r!}\dual X_r.
 \label{eq:unit-jet-local-logarithm}
\end{equation}
Since $\dual p_0^*\hhat p$ is unitary and lies in the prolongation of the
chosen chart, $\hhat X$ belongs to the prolonged pure dual-quaternion Lie
algebra.  Define
\begin{equation}
 \dual p(\tau)=\dual p_0\exp\!\left(
 \sum_{r=1}^{n}\frac{\tau^r}{r!}\dual X_r\right).
 \label{eq:unit-jet-local-realization}
\end{equation}
This is a smooth unit dual-quaternion curve.  Functional prolongation and the
finite Taylor identity give
$\mathcal T_n[\dual p](0)=\dual p_0\exp(\hhat X)=\hhat p$, proving local
realizability.
\end{proof}

The first three restrictions are therefore
\begin{align}
 \dual\varphi_1+\dual\varphi_1^*&=0,
 \label{eq:relative-constraint-order1}\\
 \dual\varphi_2+\dual\varphi_2^*
   +2\dual\varphi_1\dual\varphi_1^*&=0,
 \label{eq:relative-constraint-order2}\\
 \dual\varphi_3+\dual\varphi_3^*
   +3\dual\varphi_1\dual\varphi_2^*
   +3\dual\varphi_2\dual\varphi_1^*&=0.
 \label{eq:relative-constraint-order3}
\end{align}
In particular, $\dual\varphi_1=\dot{\dual q}\dual q^*$ is pure.  The
higher coefficients $\dual\varphi_r=\dual q^{(r)}\dual q^*$ are not generally
pure; their conjugate-symmetric parts are fixed recursively by
\eqref{eq:relative-descriptor-recursion}.  These endpoint restrictions are
calculable directly from the unit descriptor, but unitarity alone should not
be confused with interior holonomicity of a generic HMD interpolating curve.

If $\dual R(t)=\Theta(\dual q(t))$, functional prolongation and the homomorphism property imply
\begin{equation}
 \hbreve R=\Theta(\hbreve q).
 \label{eq:transform-covering-compatibility}
\end{equation}
Moreover, the relative quaternion $\hhat\varphi$ defined in
\eqref{eq:relative-descriptor-expansion} is the quaternionic field descriptor,
and one obtains the structural identity
\begin{equation}
 \boxed{
 \Theta(\hhat\varphi)
 =\Theta(\hbreve q\dual q^*)
 =\hbreve R\dual R^T
 =\hhat\Psi.}
 \label{eq:descriptor-commuting-identity}
\end{equation}
Hence the quaternionic and tensorial descriptors encode the same hierarchy of
higher-order kinematic fields.

\begin{remark}[Spatial and material trivializations]
The descriptor used throughout this paper is the spatially trivialized product
$\hbreve q\dual q^*$.  The product $\dual q^*\hbreve q$ gives the corresponding material
trivialization; the two are related by the adjoint action of $\dual q$.  Neither
should be replaced by $\dual q\hbreve q^*$, which is a different product.
\end{remark}

Finally, we reserve the underlined breve for elements known to be HMD
differential transforms; an underlined hat denotes an arbitrary HMD element:
\begin{equation}
  \hbreve q=e^{\varepsilon D_t}\dual q\quad\Longrightarrow\quad
  \hbreve q\in\HU,
  \qquad
  \hhat q\in\HU\quad\not\Longrightarrow\quad
  \hhat q=e^{\varepsilon D_t}\dual q.
  \label{eq:hat-breve}
\end{equation}
The projection $\przero$ onto the coefficient independent of $\varepsilon$
satisfies $\przero\hbreve q=\dual q$, but a generic $\hhat q$ need not be a
holonomic jet.

\section{HMD--ScLERP}

All relative quaternions below are assumed to lie in one admissible chart from
Definition~\ref{def:log-chart}.  This includes a continuous choice of unit
quaternion representatives before the logarithm is evaluated.

\begin{definition}[Algebraic HMD--ScLERP]
For $\hhat q_0,\hhat q_1\in\HU$ lying in a common logarithm chart, define
\begin{equation}
 \Sclerp(s;\hhat q_0,\hhat q_1)
 =\hhat q_0\exp\!\left(s\Log(\hhat q_0^*\hhat q_1)\right)
 =\hhat q_0(\hhat q_0^*\hhat q_1)^s,
 \qquad 0\leq s\leq 1.
 \label{eq:hmd-sclerp}
\end{equation}
\end{definition}

\begin{proposition}[Group interpolation]
The curve $\hhat Q(s)=\Sclerp(s;\hhat q_0,\hhat q_1)$ lies in $\HU$ and
satisfies
\begin{equation}
 \hhat Q(0)=\hhat q_0,
 \qquad
 \hhat Q(1)=\hhat q_1.
 \label{eq:endpoints}
\end{equation}
It is left-equivariant: for every constant $\hhat p\in\HU$,
\begin{equation}
 \Sclerp(s;\hhat p\hhat q_0,\hhat p\hhat q_1)
 =\hhat p\,\Sclerp(s;\hhat q_0,\hhat q_1).
\end{equation}
\end{proposition}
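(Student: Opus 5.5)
The plan is to verify the three claims (membership in $\HU$, endpoint values, left-equivariance) directly from the definition \eqref{eq:hmd-sclerp}. Everything should follow from two algebraic facts about the finite HMD exponential and logarithm on the prolonged admissible chart.

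\emph{Fact (a).} $\exp$ maps the prolonged pure HMD quaternions into $\HU$. Since $\hhat\xi^*=-\hhat\xi$ for pure $\hhat\xi$, and $\hhat\xi$ commutes with $-\hhat\xi$, we get
$\exp(\hhat\xi)\exp(\hhat\xi)^*=\exp(\hhat\xi)\exp(-\hhat\xi)=\exp(0)=1$.
This uses $(\exp\hhat\xi)^*=\exp(\hhat\xi^*)$, which holds term by term because conjugation is a real-linear anti-automorphism fixing $\varepsilon,\varepsilon_0$. The identity $\exp(A)\exp(B)=\exp(A+B)$ for commuting $A,B$ holds as a power-series identity in the finite functional calculus, via Proposition~\ref{prop:md-identity-conservation} and its dualized form \eqref{eq:hmd-identity-conservation}.

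\emph{Fact (b).} On the chart, $\Log$ takes values in pure HMD quaternions and satisfies $\exp(\Log\hhat u)=\hhat u$. This is the prolongation of the ordinary identity $\exp\circ\Log=\mathrm{id}$ on $\dual{\mathcal U}_{\Log}$, again by the conservation of functional identities. Purity of the logarithm is already used in the converse part of Proposition~\ref{prop:unit-hmd-jet-constraints}.

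With these facts the three claims follow in order. First, $\hhat u=\hhat q_0^*\hhat q_1\in\HU$ because $\HU$ is a group: $\hhat u\hhat u^*=\hhat q_0^*\hhat q_1\hhat q_1^*\hhat q_0=1$. Its logarithm $\hhat\xi$ is pure, so $s\hhat\xi$ is pure for real $s$, and fact (a) gives $\exp(s\hhat\xi)\in\HU$. Hence $\hhat Q(s)=\hhat q_0\exp(s\hhat\xi)\in\HU$ by closure under products.

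The endpoints come next. At $s=0$ we have $\exp(0)=1$, so $\hhat Q(0)=\hhat q_0$. At $s=1$, fact (b) gives $\hhat Q(1)=\hhat q_0\hhat q_0^*\hhat q_1=\hhat q_1$, using $\hhat q_0\hhat q_0^*=1$.

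For equivariance, $(\hhat p\hhat q_0)^*(\hhat p\hhat q_1)=\hhat q_0^*\hhat p^*\hhat p\hhat q_1=\hhat q_0^*\hhat q_1$. This uses $(\hhat a\hhat b)^*=\hhat b^*\hhat a^*$ and $\hhat p^*\hhat p=1$; the latter holds because $\hhat p^*$ is a right inverse, hence a two-sided inverse in the associative algebra, or simply because $|\hhat p|^2$ is scalar. The relative quaternion is therefore unchanged, so it lies in the same chart, and the same logarithm is taken. Associativity then gives $\Sclerp(s;\hhat p\hhat q_0,\hhat p\hhat q_1)=\hhat p\hhat q_0\exp(s\hhat\xi)$, which is the claim.

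The main obstacle is justifying facts (a) and (b) rigorously in the HMD setting, that is, that the prolonged $\exp$ and $\Log$ keep the ordinary group-theoretic identities. My first move would be to invoke \eqref{eq:hmd-identity-conservation}, applied to the analytic identities $\exp(x)\exp(-x)=1$ and $\exp(\Log y)=y$, together with the chart hypothesis fixing one branch. The subtle point is that $\Log$ of a noncommutative argument is not a scalar function. To handle it, I would write everything in the polar form \eqref{eq:hmd-polar}, where $\hhat\alpha$ is an HMD scalar and $\hhat u$ a unit HMD axis. There the identities reduce to scalar HMD identities for $\cos$, $\sin$ and $\operatorname{atan2}$, to which Proposition~\ref{prop:md-identity-conservation}, dualized, applies directly, and $\hhat u\hhat u=-1$ makes the quaternion algebra behave like complex numbers along the axis.
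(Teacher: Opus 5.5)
Your proposal is correct and follows the same route as the paper: the relative element $\hhat q_0^*\hhat q_1$ lies in $\HU$, the exponential of $s$ times its pure logarithm gives a one-parameter subgroup of $\HU$, the endpoints follow from $\exp 0=1$ and $\exp\circ\Log=\mathrm{id}$, and equivariance holds because $(\hhat p\hhat q_0)^*(\hhat p\hhat q_1)=\hhat q_0^*\hhat q_1$. Your facts (a) and (b), justified by conservation of functional identities, just make explicit what the paper's short proof takes for granted.
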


\begin{proof}
The relative element $\hhat q_0^*\hhat q_1$ belongs to $\HU$.
Exponentiating its logarithm produces a one-parameter subgroup in $\HU$.
Equation \eqref{eq:endpoints} follows by setting $s=0$ and $s=1$.
Left equivariance follows because the relative element is unchanged:
$(\hhat p\hhat q_0)^*(\hhat p\hhat q_1)
=\hhat q_0^*\hhat q_1$.
\end{proof}

\begin{corollary}[Exact algebraic interpolation of endpoint transforms]
Let $\dual q=\dual q(t)$ be a time-dependent unit dual quaternion and let
\begin{equation}
 \hbreve q_i=\left.e^{\varepsilon D_t}\dual q(t)\right|_{t=t_i},
 \qquad i=0,1,
 \label{eq:temporal-endpoint-transforms}
\end{equation}
be two prescribed order-$n$ temporal differential transforms.  Then
\begin{equation}
 \hhat Q(s)=\Sclerp(s;\hbreve q_0,\hbreve q_1)
 \label{eq:transform-endpoint-sclerp}
\end{equation}
interpolates both HMD transforms exactly.  In particular, all endpoint
coefficients encoded in $\hbreve q_0$ and $\hbreve q_1$ are matched.
\end{corollary}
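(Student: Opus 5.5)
The plan is to reduce the corollary to the Group interpolation proposition; the only real content lies in checking its hypotheses. First, the endpoints must be elements of $\HU$. By \eqref{eq:transform-unitarity}, or equivalently the first part of Proposition~\ref{prop:unit-hmd-jet-constraints}, each $\hbreve q_i=\mathcal T_n[\dual q](t_i)$ satisfies $\hbreve q_i\hbreve q_i^*=1$. The reason is that the transform is multiplicative and commutes with conjugation, so $\hbreve q_i\hbreve q_i^*=\mathcal T_n[\dual q\dual q^*](t_i)=1$.

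Second, the relative element $\hbreve q_0^*\hbreve q_1$ must lie in the HMD prolongation of a common admissible logarithm chart, so that $\Log$ and the power $(\cdot)^s$ are defined. This is the standing assumption at the start of the HMD--ScLERP section. I would make it concrete through the base projection. The coefficient of $\hbreve q_0^*\hbreve q_1$ independent of $\varepsilon$ is $\przero(\hbreve q_0^*\hbreve q_1)=\dual q(t_0)^*\dual q(t_1)$, because $\przero$ is an algebra homomorphism. Assuming this base lies in $\dual{\mathcal U}_{\Log}$ (away from the cut locus), the finite functional calculus \eqref{eq:hmd-function-prolongation} then defines $\Log$ on the whole fibre over it. This works because the $\varepsilon$-part is nilpotent and no convergence question arises.

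With both hypotheses in place, I set $\hhat Q(s)=\hbreve q_0\exp\bigl(s\Log(\hbreve q_0^*\hbreve q_1)\bigr)$ and invoke \eqref{eq:endpoints}. At $s=0$ the exponential is $1$, so $\hhat Q(0)=\hbreve q_0$. At $s=1$ one has $\exp\Log=\mathrm{id}$ on the chart, so $\hhat Q(1)=\hbreve q_0\hbreve q_0^*\hbreve q_1=\hbreve q_1$ by unitarity. Since equality in $\hhat{\mathbb Q}_n$ means equality of every $\varepsilon^k$ and $\varepsilon_0\varepsilon^k$ coefficient, all endpoint coefficients, namely $\dual q^{(k)}(t_i)/k!$ for $k\le n$, are matched.

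The main obstacle is the step $\exp(\Log\hhat x)=\hhat x$ in the HMD setting. I would justify it by conservation of functional identities (the dualized Proposition~\ref{prop:md-identity-conservation}). The identity $\exp\circ\Log=\mathrm{id}$ holds on $\dual{\mathcal U}_{\Log}$, and its HMD prolongation therefore holds on the prolonged chart. Alternatively, I would appeal directly to the finite functional calculus of \cite{Condurache2025}.

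Finally, I would stress in a remark that "exactly" refers only to the values at $s=0,1$. The corollary makes no claim that $\hhat Q(s)$ equals $\mathcal T_n[\przero\hhat Q](s)$ in the interior; that holonomicity question is treated separately.
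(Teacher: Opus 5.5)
Your proposal is correct and follows the paper's route. The corollary is an immediate consequence of the Group interpolation proposition once the endpoints are known to lie in $\HU$ by \eqref{eq:transform-unitarity}. Your added justification of the chart hypothesis through $\przero$, and of $\exp\circ\Log=\mathrm{id}$ by prolongation of functional identities, makes explicit what the paper takes for granted.
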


\begin{remark}
The interpolant in \eqref{eq:transform-endpoint-sclerp} is deliberately denoted
$\hhat Q$, not $\hbreve Q$.  The latter notation becomes legitimate only
after the holonomicity condition in the next section has been established.
\end{remark}

\section{Holonomicity and contact defects}

Let $t\in[t_0,t_1]$ be physical time, set
\begin{equation}
 T=t_1-t_0,
 \qquad s(t)=\frac{t-t_0}{T},
 \qquad
 \dual Q(t)=\przero\hhat Q(s(t)).
 \label{eq:pose-projection}
\end{equation}
There are two independent roles in the construction: $\varepsilon$ stores jet
coefficients with respect to physical time $t$, while $s$ is only the
normalized interpolation variable.  Therefore membership in
$\HU$ alone does not identify the $\varepsilon$ coefficients with successive
$t$-derivatives of $\dual Q$.

\begin{definition}[Holonomic HMD curve]
A curve $\hhat Q:[0,1]\to\HU$ is order-$n$ temporally holonomic along
$s=s(t)$ if
\begin{equation}
 \hhat Q(s(t))=\hbreve Q(t)=e^{\varepsilon D_t}\dual Q(t),
 \qquad \dual Q(t)=\przero\hhat Q(s(t)).
 \label{eq:holonomicity}
\end{equation}
The breve notation is used only for the differential transform of the time
function $\dual Q=\dual Q(t)$.  This is the finite-jet contact condition specialized to
the present HMD representation \cite{Saunders1989}.
\end{definition}

\begin{theorem}[Coefficient characterization]
A $C^n$ curve $\hhat Q:[0,1]\to\HU$ is temporally holonomic if and only if,
for every
$k=1,\ldots,n$,
\begin{equation}
 k!\,\coef{k}\hhat Q(s(t))
 =\frac{d^k\dual Q}{dt^k}(t)
 \qquad\text{for all }t\in[t_0,t_1].
 \label{eq:coefficient-test}
\end{equation}
\end{theorem}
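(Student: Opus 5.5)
The plan is to unfold both sides of the holonomicity identity \eqref{eq:holonomicity} in the basis $1,\varepsilon,\dots,\varepsilon^n$ of $\hhat{\mathbb Q}_n$ over $\dual{\mathbb Q}$ (ordinary dual quaternions) and compare them coefficient by coefficient. Every HMD quaternion $\hhat Q$ has the unique finite expansion $\hhat Q=\sum_{k=0}^n\varepsilon^k\coef{k}\hhat Q$ with $\coef{k}\hhat Q$ an ordinary dual quaternion. Uniqueness holds because $\hhat{\mathbb R}_n=\mathbb D\otimes_{\mathbb R}\mathbb R_n$ is free over $\mathbb D$ with basis $\varepsilon^k$, and the quaternion structure is taken componentwise. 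Two HMD quaternions are therefore equal if and only if all their $\varepsilon^k$-coefficients agree. This observation reduces the theorem to bookkeeping.

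First, fix $t\in[t_0,t_1]$ and write $\hhat Q(s(t))=\sum_{k=0}^n\varepsilon^k\coef{k}\hhat Q(s(t))$. By definition \eqref{eq:pose-projection}, $\dual Q(t)=\przero\hhat Q(s(t))=\coef{0}\hhat Q(s(t))$. Since $\hhat Q$ is $C^n$ and $s$ is affine, $\dual Q$ is $C^n$ in $t$, so its transform $\hbreve Q(t)=e^{\varepsilon D_t}\dual Q(t)$ is defined by \eqref{eq:differential-transform}. That transform has coefficients $\coef{k}\hbreve Q(t)=\dual Q^{(k)}(t)/k!$.

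Second, compare coefficients. The $k=0$ coefficients agree automatically, because $\coef{0}\hbreve Q=\dual Q=\coef{0}\hhat Q(s(t))$; this is just the projection identity. For $k\ge1$, equality of the $\varepsilon^k$-coefficients reads $\coef{k}\hhat Q(s(t))=\dual Q^{(k)}(t)/k!$. Multiplying by $k!$ gives exactly \eqref{eq:coefficient-test}. So holonomicity at every $t$ implies \eqref{eq:coefficient-test} for all $k$ and $t$. Conversely, if \eqref{eq:coefficient-test} holds for $k=1,\dots,n$ and all $t$, then together with the automatic $k=0$ identity all coefficients match, and uniqueness of the expansion yields $\hhat Q(s(t))=\hbreve Q(t)$ on $[t_0,t_1]$.

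There is essentially no analytic obstacle. The one point requiring care is that the derivatives in \eqref{eq:coefficient-test} are taken with respect to physical time $t$ of the projected curve $\dual Q$, not with respect to $s$ of the coefficients of $\hhat Q$. The statement compares $\varepsilon$-coefficients of $\hhat Q$ with $t$-derivatives of its own zeroth coefficient, so no chain-rule factor $T^{-k}$ enters the argument. Such factors would arise only when $d^k\dual Q/dt^k$ is rewritten as $T^{-k}\,d^k(\przero\hhat Q)/ds^k$, which is convenient later but not needed here. Membership in $\HU$ is likewise not used: unitarity of both sides is consistent with holonomicity, but it is neither needed for nor implied by the criterion, in line with \eqref{eq:hat-breve}.
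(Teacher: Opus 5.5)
Your proposal is correct and follows essentially the same route as the paper: expand $e^{\varepsilon D_t}\dual Q(t)$ as $\sum_k\varepsilon^k\dual Q^{(k)}(t)/k!$ and use that two HMD quaternions are equal exactly when all their $\varepsilon^k$-coefficients agree. Your added observations are the same argument spelled out more fully: the $k=0$ coefficients agree automatically by $\przero$, the expansion over the basis $\varepsilon^k$ is unique, and no $T^{-k}$ chain-rule factor enters.
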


\begin{proof}
Expanding the right-hand side of \eqref{eq:holonomicity} in the truncated algebra
gives
$\sum_{k=0}^n\varepsilon^k \dual Q^{(k)}(t)/k!$.  Equality of two HMD quaternions is
equivalent to equality of their coefficients.  This proves necessity and
sufficiency.
\end{proof}

\begin{definition}[Contact defects]
For a generic HMD interpolation curve define its temporal contact defects by
\begin{equation}
 \mathcal C_k[\hhat Q](t)
 =k!\,\coef{k}\hhat Q(s(t))
 -\frac{d^k\dual Q}{dt^k}(t),
 \qquad k=1,\ldots,n.
 \label{eq:contact-defects}
\end{equation}
Then $\hhat Q$ is temporally holonomic if and only if
$\mathcal C_k[\hhat Q]\equiv0$ for all $k$.
\end{definition}

Applied to \eqref{eq:transform-endpoint-sclerp}, this yields the contact
calculation
\begin{equation}
 \mathcal C_k\!\left[
 \hbreve q_0\exp\{s\Log(\hbreve q_0^*\hbreve q_1)\}
 \right](t),
 \qquad k=1,\ldots,n.
 \label{eq:central-defect}
\end{equation}
Let
\begin{equation}
 \dual\xi=\Log(\dual q_0^*\dual q_1),
 \qquad
 \dual Q(t)=\dual q_0\exp(s(t)\dual\xi)
 \label{eq:base-generator}
\end{equation}
be the pose projection of the direct HMD--ScLERP curve.  Since $\dual\xi$ is
constant,
\begin{equation}
 \frac{d^k\dual Q}{dt^k}(t)=\dual Q(t)\left(\frac{\dual\xi}{T}\right)^k.
 \label{eq:sclerp-derivatives}
\end{equation}

\begin{theorem}[Endpoint defects and generic non-holonomicity]
\label{thm:endpoint-defects-non-holonomicity}
Suppose
$\hbreve q_i=\sum_{k=0}^n\varepsilon^k \dual q_i^{(k)}/k!$, $i=0,1$, are prescribed
differential transforms.  For the direct interpolant
\begin{equation*}
 \hhat Q(s)=\Sclerp(s;\hbreve q_0,\hbreve q_1),
\end{equation*}
the endpoint contact defects are
\begin{align}
 \mathcal C_k[\hhat Q](t_0)&=\dual q_0^{(k)}
 -\dual q_0\left(\frac{\dual\xi}{T}\right)^k,
 \label{eq:left-endpoint-defect}\\
 \mathcal C_k[\hhat Q](t_1)&=\dual q_1^{(k)}
 -\dual q_1\left(\frac{\dual\xi}{T}\right)^k,
 \label{eq:right-endpoint-defect}
\end{align}
for $k=1,\ldots,n$.  Hence direct HMD--ScLERP is not holonomic for generic
endpoint jets.  Already at first order, holonomicity requires
\begin{equation}
 \dual q_0^{(1)}=\dual q_0\frac{\dual\xi}{T},
 \qquad \dual q_1^{(1)}=\dual q_1\frac{\dual\xi}{T}.
 \label{eq:first-order-compatibility}
\end{equation}
\end{theorem}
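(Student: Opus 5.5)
The plan is to evaluate the contact defect \eqref{eq:contact-defects} at $t=t_0$ and $t=t_1$, where $s=0$ and $s=1$. At these parameters the group interpolation proposition gives $\hhat Q(0)=\hbreve q_0$ and $\hhat Q(1)=\hbreve q_1$. The first term $k!\,\coef{k}\hhat Q(s(t_i))$ is then read off from the prescribed expansion $\hbreve q_i=\sum_k\varepsilon^k\dual q_i^{(k)}/k!$, and equals $\dual q_i^{(k)}$.

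For the second term, the pose projection $\dual Q(t)=\przero\hhat Q(s(t))$ must be identified with $\dual q_0\exp(s(t)\dual\xi)$ as in \eqref{eq:base-generator}. I would do this by noting that $\przero$ is an algebra homomorphism $\hhat{\mathbb Q}_n\to\dual{\mathbb Q}$: it sets $\varepsilon=0$, and $\varepsilon$ is central and compatible with the truncated product. It also commutes with conjugation.

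The finite functional calculus defining $\exp$ and $\Log$ on the HMD prolongation reduces, at $\varepsilon=0$, to the ordinary dual-quaternion $\exp$ and the chosen branch of $\Log$. This follows because the prolongation formula \eqref{eq:hmd-function-prolongation} has zeroth term $f(\dual x)$. We use here that $\dual q_0^*\dual q_1$ lies in the fixed admissible chart. Hence
\[
\przero\bigl(\hbreve q_0\exp(s\Log(\hbreve q_0^*\hbreve q_1))\bigr)=\dual q_0\exp\bigl(s\Log(\dual q_0^*\dual q_1)\bigr).
\]
Then \eqref{eq:sclerp-derivatives} gives $d^k\dual Q/dt^k=\dual Q(t)(\dual\xi/T)^k$, with the chain-rule factor $T^{-k}$ coming from $ds/dt=1/T$.

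At the endpoints, $\dual Q(t_0)=\dual q_0$ and $\dual Q(t_1)=\dual q_0\exp(\dual\xi)=\dual q_0\dual q_0^*\dual q_1=\dual q_1$. The last step uses $\exp\Log=\mathrm{id}$ on the chart and $\dual q_0\dual q_0^*=1$. Subtracting the two terms yields \eqref{eq:left-endpoint-defect} and \eqref{eq:right-endpoint-defect}.

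For the consequences, the coefficient characterization theorem makes holonomicity equivalent to $\mathcal C_k\equiv0$. In particular the $k=1$ endpoint defects must vanish, which is exactly \eqref{eq:first-order-compatibility}. For generic non-holonomicity, I would observe the following. The right-hand sides $\dual q_i(\dual\xi/T)$ are fully determined by the zeroth-order data $\dual q_0,\dual q_1$. The first-order jets are free subject only to purity of $\dual\varphi_1$ by \eqref{eq:relative-constraint-order1}, and the local realizability converse of Proposition~\ref{prop:unit-hmd-jet-constraints} confirms they are genuinely free. Thus $\dual q_0^{(1)}=\dual q_0\dual\eta$ can be chosen with $\dual\eta$ any pure dual quaternion. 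The condition $\dual\eta=\dual\xi/T$ is a proper closed (affine) subset of positive codimension in the jet space, so the defect is nonzero for generic data.

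The main obstacle I expect is the justification that $\przero$ commutes with the HMD logarithm under the chart assumption. This means carefully verifying that the branch fixed on $\dual{\mathcal U}_{\Log}$ is the one whose prolongation is used, so that no sign or branch discrepancy enters the identification of $\dual Q$. The rest is direct substitution.
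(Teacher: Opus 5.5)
Your proposal is correct and follows the paper's proof. Exact endpoint interpolation gives $k!\,\coef{k}\hhat Q(s(t_i))=\dual q_i^{(k)}$, the constant-generator pose projection gives $\dual Q^{(k)}(t_i)=\dual q_i(\dual\xi/T)^k$, you subtract the two, and the first-order compatibility condition cuts out a proper subset of admissible endpoint velocities. Your additions are an explicit check that $\przero$ commutes with the HMD $\exp$ and $\Log$, which the paper assumes in \eqref{eq:base-generator}, and a sharper codimension argument for genericity; neither changes the route.
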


\begin{proof}
At $t=t_i$, the coefficient part of the HMD endpoint is prescribed by
$k!\coef{k}\hhat Q(s(t_i))=\dual q_i^{(k)}$.  On the other hand,
\eqref{eq:sclerp-derivatives} gives
$\dual Q^{(k)}(t_i)=\dual q_i(\dual\xi/T)^k$.  Substitution in
\eqref{eq:contact-defects} proves
\eqref{eq:left-endpoint-defect}--\eqref{eq:right-endpoint-defect}.  Conditions
\eqref{eq:first-order-compatibility} define a proper subset of possible
endpoint velocities, which proves generic non-holonomicity.
\end{proof}

\begin{corollary}[Constant-screw case]\label{cor:constant-screw}
If the two endpoint transforms are restrictions at $t_0,t_1$ of the same
constant-screw motion
$\dual q(t)=\dual q_0\exp(s(t)\dual\xi)$, then direct HMD--ScLERP is temporally holonomic.
\end{corollary}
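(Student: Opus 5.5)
The plan is to compute both sides of the holonomicity condition \eqref{eq:holonomicity} in closed form and see that they coincide. The key observation is that the order-$n$ temporal transform is the truncated Taylor shift $t\mapsto t+\varepsilon$. For the affine reparametrization $s(t)=(t-t_0)/T$ we have $\mathcal T_n[s](t)=s(t)+\varepsilon/T$, because all derivatives of $s$ beyond the first vanish. With $\dual q(t)=\dual q_0\exp(s(t)\dual\xi)$ and the rules \eqref{eq:md-transform-algebra} and \eqref{eq:transform-exp-log}, this gives
\begin{equation*}
 \mathcal T_n[\dual q](t)=\dual q_0\exp\!\Bigl(\bigl(s(t)+\tfrac{\varepsilon}{T}\bigr)\dual\xi\Bigr).
\end{equation*}
Evaluating at $t=t_0$ and $t=t_1$, where $s=0$ and $s=1$, yields the endpoints
\begin{equation*}
 \hbreve q_0=\dual q_0\exp(\tfrac{\varepsilon}{T}\dual\xi),\qquad
 \hbreve q_1=\dual q_0\exp\!\bigl((1+\tfrac{\varepsilon}{T})\dual\xi\bigr).
\end{equation*}
Here $\dual q_1=\dual q_0\exp(\dual\xi)$, which is consistent with \eqref{eq:base-generator}.

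Next I compute the relative element. Since $\hbreve q_0^*=\exp(-\tfrac{\varepsilon}{T}\dual\xi)\dual q_0^*$, I get
\begin{equation*}
 \hbreve q_0^*\hbreve q_1=\exp(-\tfrac{\varepsilon}{T}\dual\xi)\exp\!\bigl((1+\tfrac{\varepsilon}{T})\dual\xi\bigr)=\exp(\dual\xi).
\end{equation*}
This step uses the rule $\exp(a)\exp(b)=\exp(a+b)$ for commuting $a,b$ in the HMD quaternion algebra. The two exponents are scalar HMD multiples of the single pure element $\dual\xi$, so they commute. The identity is the HMD prolongation of the corresponding real or dual identity, justified by Proposition~\ref{prop:md-identity-conservation} and its dualization \eqref{eq:hmd-identity-conservation}. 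A more elementary route is to expand $\exp$ as a power series in the single variable $\dual\xi$ with commutative HMD coefficients, where the Cauchy product identity is immediate.

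Then I need $\Log(\exp\dual\xi)=\dual\xi$. The relative element has no $\varepsilon$-part, and its base $\dual q_0^*\dual q_1$ lies in the admissible chart with $\Log(\dual q_0^*\dual q_1)=\dual\xi$ by \eqref{eq:base-generator}. The prolonged logarithm therefore returns $\dual\xi$ itself. This branch-consistency step is the only delicate point. I expect it to follow because the finite functional calculus of an element with vanishing nilpotent part reduces to the ordinary chart logarithm, and the standing chart assumption of this section covers it.

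Substituting into \eqref{eq:hmd-sclerp} gives
\begin{equation*}
 \hhat Q(s)=\dual q_0\exp(\tfrac{\varepsilon}{T}\dual\xi)\exp(s\dual\xi)=\dual q_0\exp\!\bigl((s+\tfrac{\varepsilon}{T})\dual\xi\bigr),
\end{equation*}
again combining the commuting exponentials. At $s=s(t)$ this is exactly $\mathcal T_n[\dual q](t)$ from the first step. Applying $\przero$ gives $\dual Q(t)=\dual q(t)$, so $\hhat Q(s(t))=e^{\varepsilon D_t}\dual Q(t)$, which is \eqref{eq:holonomicity}.

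As a consistency check, the contact defects \eqref{eq:contact-defects} then vanish identically. In particular the endpoint defects of Theorem~\ref{thm:endpoint-defects-non-holonomicity} vanish, since $\dual q_i^{(k)}=\dual q_i(\dual\xi/T)^k$ by \eqref{eq:sclerp-derivatives}.
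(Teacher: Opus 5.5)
Your proposal is correct and follows the paper's own argument step for step: the affine Taylor shift $\mathcal T_n[s](t)=s(t)+\varepsilon/T$, the closed form $\hbreve q(t)=\dual q_0\exp((s(t)+\varepsilon/T)\dual\xi)$, the relative element reducing to $\exp\dual\xi$ with logarithm $\dual\xi$, and substitution into the ScLERP formula. You additionally make explicit two points the paper's terse proof leaves implicit, namely the commuting-exponential identity and the branch-consistency requirement that $\Log(\exp\dual\xi)=\dual\xi$ in the chosen chart.
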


\begin{proof}
In the commuting subalgebra generated by $\dual\xi$,
$s(t)$ is affine and therefore $s(t+\varepsilon)=s(t)+\varepsilon/T$.  Hence
$\hbreve q(t)=\dual q_0\exp((s(t)+\varepsilon/T)\dual\xi)$.  Consequently the relative HMD
element between the endpoints has logarithm $\dual\xi$, and
$\hhat Q(s(t))=\dual q_0\exp((s(t)+\varepsilon/T)\dual\xi)
=e^{\varepsilon D_t}\dual Q(t)$.
\end{proof}

\begin{proposition}[First-order interior defect]
Let $n=1$, write $\dual v_i=\dual q_i^{(1)}$, set
\begin{equation}
 \dual a=\dual q_0^*\dual q_1,\qquad
 \dual{\delta a}=\dual v_0^*\dual q_1+\dual q_0^*\dual v_1,\qquad
 \dual\eta=\mathrm D\Log_{\dual a}[\dual{\delta a}].
 \label{eq:first-order-relative-data}
\end{equation}
For the direct HMD--ScLERP, the complete first-order contact defect is
\begin{equation}
 \boxed{\mathcal C_1[\hhat Q](t)
 =\dual v_0e^{s\dual\xi}+\dual q_0\,\mathrm D\exp_{s\dual\xi}[s\dual\eta]
 -\dual q_0e^{s\dual\xi}\frac{\dual\xi}{T},\qquad s=s(t).}
 \label{eq:first-order-interior-defect}
\end{equation}
Here $\mathrm D\Log$ and $\mathrm D\exp$ denote Fr\'echet differentials of
the local mutually inverse maps
$\Log:\dual{\mathcal U}_{\Log}\subset\DQ\to\dual{\mathfrak u}$ and
$\exp:\dual{\mathfrak u}\to\DQ$.  More precisely, if
$\dual a=\exp\dual\xi$, then
\begin{equation*}
 \mathrm D\Log_{\dual a}:T_{\dual a}\DQ\longrightarrow\dual{\mathfrak u},
 \qquad
 \mathrm D\exp_{\dual\xi}:\dual{\mathfrak u}\longrightarrow T_{\dual a}\DQ,
 \qquad
 \mathrm D\Log_{\dual a}=(\mathrm D\exp_{\dual\xi})^{-1}.
\end{equation*}
Equivalently, an implementation may obtain $\dual\eta$ without assembling
either differential explicitly: it is the coefficient of $\varepsilon$ in
$\Log(\dual a+\varepsilon\dual{\delta a})$, evaluated by the same finite HMD
functional calculus used throughout the construction.
\end{proposition}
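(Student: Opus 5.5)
The plan is to compute both terms of the defect definition \eqref{eq:contact-defects} for $k=1$ and subtract. For $n=1$ the algebra truncates at $\varepsilon^2=0$, so every HMD quantity has the form $\dual x+\varepsilon\dual y$. The first coefficient then behaves like a first-order directional derivative.

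First we write the endpoints as $\hbreve q_i=\dual q_i+\varepsilon\dual v_i$. Since conjugation fixes $\varepsilon$, the relative element is
\begin{equation*}
\hbreve q_0^*\hbreve q_1=\dual q_0^*\dual q_1+\varepsilon(\dual v_0^*\dual q_1+\dual q_0^*\dual v_1)=\dual a+\varepsilon\dual{\delta a}.
\end{equation*}
Next we apply the finite functional calculus \eqref{eq:hmd-function-prolongation} to $\Log$ on the admissible chart. Because $\varepsilon^2=0$, the prolongation is exactly the first-order Taylor formula
\begin{equation*}
\Log(\dual a+\varepsilon\dual{\delta a})=\Log\dual a+\varepsilon\,\mathrm D\Log_{\dual a}[\dual{\delta a}]=\dual\xi+\varepsilon\dual\eta.
\end{equation*}
This identity also gives the stated implementation remark. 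The same argument applied to $\exp$ at $s\dual\xi$ in the direction $s\dual\eta$ gives
\begin{equation*}
\exp\bigl(s\dual\xi+\varepsilon s\dual\eta\bigr)=e^{s\dual\xi}+\varepsilon\,\mathrm D\exp_{s\dual\xi}[s\dual\eta].
\end{equation*}
Left-multiplying by $\hbreve q_0$ and discarding $\varepsilon^2$ yields
\begin{equation*}
\hhat Q(s)=\dual q_0e^{s\dual\xi}+\varepsilon\bigl(\dual v_0e^{s\dual\xi}+\dual q_0\,\mathrm D\exp_{s\dual\xi}[s\dual\eta]\bigr).
\end{equation*}
The coefficient of $\varepsilon$ is $1!\,\coef{1}\hhat Q(s(t))$. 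This also confirms that $\przero\hhat Q=\dual q_0e^{s\dual\xi}$, which is the projection \eqref{eq:base-generator}.

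The derivative term comes directly from \eqref{eq:sclerp-derivatives} with $k=1$: $\dot{\dual Q}=\dual q_0e^{s\dual\xi}\dual\xi/T$. Subtracting it from the coefficient above gives \eqref{eq:first-order-interior-defect}.

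The main point needing care is the justification that the HMD prolongations of $\exp$ and $\Log$ coincide with Fréchet differentials in noncommuting directions, since $\dual\eta$ need not commute with $\dual\xi$. For the finite calculus defined through scalar functions of the dual angle and axis, we would argue from the analytic power-series definition. There the $\varepsilon$-linear term of $\exp(\dual X+\varepsilon\dual Y)$ is $\sum_m\frac1{m!}\sum_j\dual X^j\dual Y\dual X^{m-1-j}$, which is precisely $\mathrm D\exp_{\dual X}[\dual Y]$. For $\Log$ we would use the inverse-function relation $\exp(\Log(\cdot))=\mathrm{id}$ on the chart, prolonged via \eqref{eq:hmd-identity-conservation}; comparing $\varepsilon$-coefficients gives $\mathrm D\Log_{\dual a}=(\mathrm D\exp_{\dual\xi})^{-1}$. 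The chart assumption is what guarantees that the same branch is used for the base point and its prolongation.
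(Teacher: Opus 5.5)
Your proposal is correct and follows the paper's proof essentially step for step: form $\hbreve q_0^*\hbreve q_1=\dual a+\varepsilon\dual{\delta a}$, read off $\Log(\hbreve q_0^*\hbreve q_1)=\dual\xi+\varepsilon\dual\eta$, use the Fr\'echet differential of $\exp$ to get the $\varepsilon$-coefficient of $\hbreve q_0\exp(s\dual\xi+\varepsilon s\dual\eta)$, and then subtract $\dot{\dual Q}=\dual Q\dual\xi/T$. Your extra justification supplies what the paper's proof takes for granted, namely the power-series computation for $\exp$ and the prolonged identity $\exp\circ\Log=\mathrm{id}$ for $\Log$. You are also right not to rely on the scalar prolongation formula \eqref{eq:hmd-function-prolongation} itself, since in noncommuting directions it would give the incorrect term $\varepsilon\dual Y e^{\dual X}$.
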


\begin{proof}
The first-order prolongations give
$\hbreve q_0^*\hbreve q_1=\dual a+\varepsilon\dual{\delta a}$ and hence
$\Log(\hbreve q_0^*\hbreve q_1)=\dual\xi+\varepsilon\dual\eta$.  Applying the defining
property of the Fr\'echet differential to the exponential yields
\begin{equation*}
 [\varepsilon]\hhat Q(s)
 =\dual v_0e^{s\dual\xi}+\dual q_0\,\mathrm D\exp_{s\dual\xi}[s\dual\eta].
\end{equation*}
Subtracting $d\dual Q/dt=\dual Q\dual\xi/T$ proves the formula.  At the left
endpoint, the Fr\'echet term vanishes because
$\mathrm D\exp_0[0]=0$ (although $\mathrm D\exp_0$ itself is the identity).
The right-endpoint reduction follows from exact algebraic interpolation.
\end{proof}

\begin{proposition}[Closed inverse of the exponential differential]
\label{prop:closed-dexp-inverse}
The derivatives $\dual v_i=\dual q_i^{(1)}$ are tangent vectors at
$\dual q_i$ and are not, in general, pure dual quaternions.  Purity applies
to their left- or right-trivialized representatives, such as
$\dual q_i^*\dual v_i$.  Consequently, the general expression for
$\dual{\delta a}$ in \eqref{eq:first-order-relative-data} must be retained.
In the numerical tests of Section~7, $\dual q_0=1$ and $\dual v_0$ is pure,
so only there one may write $\dual v_0^*=-\dual v_0$.

On the pure dual-quaternion subspace,
$\mathrm D\exp_{\dual\xi}$ is a linear isomorphism whenever
$0<\|\dual\xi_r\|<\pi$, where $\dual\xi_r$ is the primary
(real-quaternion) part of $\dual\xi$. On the six-dimensional pure
dual-quaternion space one recovers
$\dual\eta$ by solving the $8{\times}6$ linear system
\begin{equation}
 \bigl(\mathrm D\exp_{\dual\xi}\bigr)\dual\eta=\dual{\delta a}
 \label{eq:eta-linear-system}
\end{equation}
in least-squares form; the system is in fact consistent (residual at
machine precision) whenever $\dual{\delta a}$ is tangent at $\dual a$ and
$\dual\theta_r<\pi$, so ``least-squares'' refers to the solver used, not to a
rank deficiency of the map. Equivalently, on the dual-quaternion algebra the
inverse admits a closed form in quaternion products alone---no Lie
brackets, no infinite series.  Let
$\dual\theta=\|\dual\xi\|\in\mathbb R[\varepsilon_0]$ denote the dual
norm of the quaternion logarithm (one half of the physical rotation angle
in the real part), i.e.\ $\dual\theta_r=\|\dual\xi_r\|$ and
$\dual\theta_d=\langle\dual\xi_r,\dual\xi_d\rangle/\dual\theta_r$; scalar
functions of $\dual\theta$ are extended by the first-order Taylor rule
$f(\dual\theta_r+\varepsilon_0\dual\theta_d)=f(\dual\theta_r)+
\varepsilon_0\dual\theta_d f'(\dual\theta_r)$ inherent to the HMD calculus.
Then
\begin{equation}
 \dual\eta
  =\alpha(\dual\theta)\,\dual{\delta a}
  +\beta(\dual\theta)\bigl(\dual\xi\,\dual{\delta a}+\dual{\delta a}\,\dual\xi\bigr)
  +\gamma(\dual\theta)\,\dual\xi\,\dual{\delta a}\,\dual\xi,
 \label{eq:eta-rodrigues}
\end{equation}
with
\begin{equation}
 \alpha(\theta)=\frac{\theta}{2\sin\theta}+\frac{\cos\theta}{2},\qquad
 \beta(\theta)=-\frac{\sin\theta}{2\theta},\qquad
 \gamma(\theta)=\frac{\theta/\sin\theta-\cos\theta}{2\theta^2}.
 \label{eq:eta-coefficients}
\end{equation}
The identity $\alpha+\gamma\theta^2=\theta/\sin\theta$ links the diagonal and
rank-one parts.  Both \eqref{eq:eta-linear-system} and
\eqref{eq:eta-rodrigues} therefore give the unique pure solution.
\end{proposition}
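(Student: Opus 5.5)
The plan is to reduce everything to an explicit description of $\mathrm D\exp_{\dual\xi}$ on the pure dual quaternions. I will verify the closed formula \eqref{eq:eta-rodrigues} over the real quaternions first, and then transfer it to dual quaternions. Throughout, write $s=\sin\theta$ and $c=\cos\theta$.

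\textbf{Tangency remarks.} The first paragraph of the statement is immediate. Differentiating $\dual q\dual q^*=1$ gives $\dual v_i\dual q_i^*+\dual q_i\dual v_i^*=0$, so the trivialized representatives $\dual q_i^*\dual v_i$ and $\dual v_i\dual q_i^*$ are pure, whereas $\dual v_i$ itself need not be. In the same way, $\dual{\delta a}$ is the first-order coefficient of the unit element $\hbreve q_0^*\hbreve q_1$. Hence $\dual a^*\dual{\delta a}$ is pure, i.e.\ $\dual{\delta a}\in T_{\dual a}\DQ$, which is six-dimensional.

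\textbf{Structure of $\mathrm D\exp$ over the reals.} Take $\xi$ pure real with $\theta=\|\xi\|$. Then $\xi^2=-\theta^2$ and $a=\exp\xi=c+\xi\, s/\theta$. Split a pure $\eta$ as $\eta=\eta_\parallel+\eta_\perp$, where
\[
 \eta_\parallel=-\frac{(\xi\eta+\eta\xi)\,\xi}{2\theta^2}
\]
commutes with $\xi$ and $\eta_\perp$ anticommutes with $\xi$.
\begin{itemize}
 \item Along $\eta_\parallel$ the exponential restricts to a commuting subalgebra, so $\mathrm D\exp_\xi[\eta_\parallel]=a\,\eta_\parallel$.
 \item Along $\eta_\perp$, the quantity $\theta$ is stationary to first order. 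Only the factor $\xi\mapsto\xi\, s/\theta$ moves, giving $\mathrm D\exp_\xi[\eta_\perp]=(s/\theta)\,\eta_\perp$.
\end{itemize}
Consequently $\mathrm D\exp_\xi$ is injective on pure quaternions exactly when $s\neq0$, which holds for $0<\theta<\pi$. Its image is three-dimensional, so it equals the tangent space at $a$. This is what makes the least-squares system \eqref{eq:eta-linear-system} consistent with zero residual.

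\textbf{Verifying the closed form.} Let $F(\delta a)$ denote the right-hand side of \eqref{eq:eta-rodrigues}. I check $F(\mathrm D\exp_\xi[\eta])=\eta$ on each component separately.
\begin{itemize}
 \item \emph{Perpendicular part.} Here $\delta a=(s/\theta)\eta_\perp$ anticommutes with $\xi$. So $\xi\delta a+\delta a\xi=0$ and $\xi\delta a\xi=\theta^2\delta a$. This gives $F=(\alpha+\gamma\theta^2)(s/\theta)\eta_\perp=\eta_\perp$, using the stated identity $\alpha+\gamma\theta^2=\theta/s$.
 \item \emph{Parallel part.} Here $\delta a=(c+\xi s/\theta)\eta_\parallel$ commutes with $\xi$. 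So $\xi\delta a+\delta a\xi=2\xi\delta a$ and $\xi\delta a\xi=-\theta^2\delta a$. Since $\alpha-\gamma\theta^2=c$ and $2\beta=-s/\theta$, we get $F=(c-\xi s/\theta)(c+\xi s/\theta)\eta_\parallel=(c^2+s^2)\eta_\parallel=\eta_\parallel$.
\end{itemize}
Because $F$ is linear in $\delta a$, it inverts $\mathrm D\exp_\xi$ on the tangent space. The output is pure and coincides with the unique solution of \eqref{eq:eta-linear-system}.

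\textbf{Dualization (main obstacle).} The step I expect to be hardest is justifying that the same computation holds with dual coefficients. The cleanest route is to redo the argument verbatim over $\mathbb R[\varepsilon_0]$. Since $\dual\theta_r>0$, $\dual\theta$ is invertible, and the definition $\dual\theta_d=\langle\dual\xi_r,\dual\xi_d\rangle/\dual\theta_r$ is exactly what makes $\dual\xi^2=-\dual\theta^2$ hold as a dual identity. The scalar functions $\sin$, $\cos$, $\alpha$, $\beta$, $\gamma$ of $\dual\theta$ are defined by the Taylor rule. By the dual form of Proposition~\ref{prop:md-identity-conservation}, they satisfy the same identities: $\dual s^2+\dual c^2=1$, $\alpha+\gamma\theta^2=\theta/\sin\theta$, and $\alpha-\gamma\theta^2=\cos\theta$.

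The projector defining $\dual\eta_\parallel$ only uses $\dual\theta^{-2}$, so the parallel/perpendicular splitting makes sense over the dual ring. Both differential formulas follow from the functional prolongation of $\exp\dual\xi=\cos\dual\theta+\dual\xi\sin\dual\theta/\dual\theta$. Injectivity needs only that $\sin\dual\theta$ be invertible, i.e.\ $\sin\dual\theta_r\neq0$, which holds for $0<\dual\theta_r<\pi$. With these facts in place, the real verification transfers line by line and gives both the isomorphism claim on the six-dimensional pure subspace and the closed inverse \eqref{eq:eta-rodrigues}.
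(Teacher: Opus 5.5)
Your proposal is correct and follows the paper's proof essentially step for step. It uses the same splitting of pure increments into parts commuting and anticommuting with $\xi$, the same two formulas for $\mathrm D\exp_\xi$, the same perpendicular check via $\alpha+\gamma\theta^2=\theta/\sin\theta$, and dualization by the first-order Taylor rule. Your parallel-part check, which reduces the formula to left multiplication by $\cos\theta-\xi\sin\theta/\theta=\exp(\xi)^{*}$, is a compact repackaging of the paper's scalar and $u$-component equations. Your explicit verification that the stated $\theta_d$ gives $\xi^2=-\theta^2$ over the dual numbers usefully fills in a detail that the paper's one-line dualization leaves implicit.
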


\begin{proof}
It is enough first to work over ordinary quaternions; dualization then follows
coefficientwise from the first-order Taylor rule for analytic scalar
functions. Write $\xi=\theta u$, where $u^2=-1$, and decompose every pure
increment as
\begin{equation}
 \eta=\eta_{\parallel}+\eta_{\perp},\qquad
 \eta_{\parallel}=\tfrac12(\eta-u\eta u),\qquad
 \eta_{\perp}=\tfrac12(\eta+u\eta u).
 \label{eq:eta-parallel-perpendicular}
\end{equation}
Thus $\eta_{\parallel}$ is parallel to $u$, whereas
$u\eta_{\perp}+\eta_{\perp}u=0$. Differentiating
$\exp\xi=\cos\theta+u\sin\theta$ gives
\begin{equation}
 \mathrm D\exp_{\xi}[\eta_{\parallel}]
   =\exp(\xi)\eta_{\parallel},
 \qquad
 \mathrm D\exp_{\xi}[\eta_{\perp}]
   =\frac{\sin\theta}{\theta}\eta_{\perp}.
 \label{eq:dexp-parallel-perpendicular}
\end{equation}
Consequently the differential is invertible for $0<\theta<\pi$.
Substitute its two images separately into the right-hand side of
\eqref{eq:eta-rodrigues}. For a perpendicular increment the anticommutator
vanishes and $\xi\dual{\delta a}\xi=\theta^2\dual{\delta a}$; the multiplier
is one because $\alpha+\gamma\theta^2=\theta/\sin\theta$. For a parallel
increment, writing $\eta_{\parallel}=\lambda u$ and
$\dual{\delta a}=\lambda(-\sin\theta+u\cos\theta)$ reduces the scalar and
$u$ components respectively to
\begin{align*}
 -\alpha\sin\theta-2\beta\theta\cos\theta
       +\gamma\theta^2\sin\theta&=0,\\
 \alpha\cos\theta-2\beta\theta\sin\theta
       -\gamma\theta^2\cos\theta&=1,
\end{align*}
which follow directly from \eqref{eq:eta-coefficients}. Hence
\eqref{eq:eta-rodrigues} inverts $\mathrm D\exp_{\xi}$ on both invariant
subspaces and therefore on their direct sum. Replacing $\theta$ by the dual
norm and applying the dual Taylor extension proves the stated dual-quaternion
formula.
\end{proof}

\begin{remark}[Implementation and limiting cases]
The linear system \eqref{eq:eta-linear-system} and the closed expression
\eqref{eq:eta-rodrigues} agree to machine precision on the screw test of
\S7; the accompanying implementation uses \eqref{eq:eta-linear-system}.
The scalar functions $\alpha,\gamma$ are analytic on
$\theta_r\in(0,\pi)$ with removable singularities at $\theta_r=0$; at
$\theta_r\uparrow\pi$ they diverge as $1/\sin\theta_r$, while $\beta=-\sin\theta_r/(2\theta_r)$
remains bounded and vanishes there. The chart-switch
$\dual q\mapsto-\dual q$ resets the rotation angle inside
$(0,\pi)$ and admits the same closed-form.
For pure translations $\dual\xi_r=0$, the closed form
\eqref{eq:eta-rodrigues} remains valid but is not the identity.
The justification is the parity of the scalar functions: $\alpha,\beta,\gamma$
are even in $\theta$, so the Taylor extension
$f(\dual\theta_r+\varepsilon_0\dual\theta_d)=f(\dual\theta_r)+
\varepsilon_0\dual\theta_d f'(\dual\theta_r)$ evaluated at $\dual\theta_r=0$
reduces to the constant values
$\alpha(0)=1$, $\beta(0)=-1/2$, $\gamma(0)=1/3$ irrespective of any
directional limit of $\dual\theta_d$.  Because
$\dual\xi\,\dual{\delta a}\,\dual\xi$ contains a factor $\varepsilon_0^2=0$,
\eqref{eq:eta-rodrigues} reduces to
\begin{equation}
 \dual\eta=\dual{\delta a}-\tfrac12\bigl(\dual\xi\,\dual{\delta a}+\dual{\delta a}\,\dual\xi\bigr)
 \qquad\text{(pure translation, $\dual\xi_r=0$).}
 \label{eq:eta-pure-translation}
\end{equation}
The anticommutator term survives; $\mathrm D\exp_{\dual\xi}$ at
$\dual\xi=\varepsilon_0\dual\xi_d$ is not the identity on pure dual
quaternions, and identifying $\dual\eta$ with $\dual{\delta a}$ would
leave a first-order residue $\tfrac12\bigl(\dual\xi\,\dual{\delta a}+
\dual{\delta a}\,\dual\xi\bigr)$ on the dual part of
$\mathrm D\exp_{\dual\xi}[\dual\eta]-\dual{\delta a}$.
At $s=0$ and $s=1$, \eqref{eq:first-order-interior-defect} reduces to the
endpoint formulas \eqref{eq:left-endpoint-defect} and
\eqref{eq:right-endpoint-defect}.
\end{remark}

\begin{proposition}[Endpoint matching versus interior realizability]
If the inputs in \eqref{eq:transform-endpoint-sclerp} are differential
transforms, their encoded data are matched exactly at $s=0$ and $s=1$.
This endpoint statement does not imply that
$\mathcal C_k[\hhat Q](t)=0$ in the open time interval.
\end{proposition}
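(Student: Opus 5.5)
The statement has two parts. The first is a positive claim: the endpoint data are matched. The second is a negative claim: endpoint matching does not force the interior contact defects to vanish. I would prove them separately.

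For the first part, I would invoke the Group interpolation proposition, which gives $\hhat Q(0)=\hbreve q_0$ and $\hhat Q(1)=\hbreve q_1$. Since equality in the HMD algebra is coefficientwise, every coefficient $k!\,\coef{k}\hhat Q(s(t_i))$ equals $\dual q_i^{(k)}$ for $k=0,\ldots,n$. This is exactly the Corollary on exact algebraic interpolation of endpoint transforms, so the positive half needs no new work.

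For the negative part, I would show that the implication ``endpoint matching $\Rightarrow$ $\mathcal C_k[\hhat Q]\equiv0$ on $(t_0,t_1)$'' fails by exhibiting one explicit counterexample. The simplest route is to take $n=1$ and use the first-order interior defect formula \eqref{eq:first-order-interior-defect}. Choose $\dual q_0=1$, a pure $\dual v_0$, and endpoint data violating \eqref{eq:first-order-compatibility}. By Theorem~\ref{thm:endpoint-defects-non-holonomicity}, $\mathcal C_1[\hhat Q](t_0)=\dual v_0-\dual\xi/T\neq0$. Since $\hhat Q$ is smooth in $s$ and $\dual Q$ is smooth in $t$, the map $t\mapsto\mathcal C_1[\hhat Q](t)$ is continuous. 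It is therefore nonzero on a neighbourhood $[t_0,t_0+\delta)$, and hence at interior times. The endpoint data are still matched exactly, because matching concerns the coefficients of $\hhat Q$, not the derivatives of its projection.

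The subtle point is the following. At the endpoints, ``matched'' refers to $\coef{k}\hhat Q$, while $\mathcal C_k$ compares these coefficients with derivatives of $\dual Q=\przero\hhat Q$. To make this separation airtight I would add a second counterexample in which the endpoint defects vanish but an interior defect does not. For this I would take endpoint velocities satisfying \eqref{eq:first-order-compatibility} at both ends but perturb only the \emph{second}-order coefficients ($n=2$). The plan is to show that $\coef{1}\hhat Q(s)$ at interior $s$ is generally not $\dual Q\dual\xi/T$. The cleanest way is to keep $n=1$ and instead choose $\dual v_0,\dual v_1$ with $\dual\eta\neq\dual\xi$ but with vanishing endpoint defects. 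That is impossible at order one, since vanishing endpoint defects force $\dual\eta=\dual\xi$ (the constant-screw case of Corollary~\ref{cor:constant-screw}). So I would rely on the first counterexample, which already suffices for the stated non-implication.

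The main obstacle is to verify carefully that the interior defect is genuinely nonzero rather than only nonzero at the endpoint. I would handle it through the continuity argument above. I would also give a concrete rotation instance: $\dual\xi$ about $e_3$ and $\dual v_0$ about $e_1$. There \eqref{eq:first-order-interior-defect} has a nonvanishing $e_1$-component for small $s>0$, computed using \eqref{eq:dexp-parallel-perpendicular}.
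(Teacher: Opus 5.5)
Your proof is correct. The first half (endpoint matching is \eqref{eq:endpoints}, read coefficientwise) is exactly the paper's argument. The second half takes a different route.

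The paper argues the non-implication structurally. It observes that the identities \eqref{eq:coefficient-test} are differential conditions along the whole interval, which are not consequences of membership in $\HU$ or of two pointwise equalities. It leaves the concrete witnesses implicit in Theorem~\ref{thm:endpoint-defects-non-holonomicity} and the numerical tables.

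You instead give an explicit witness: a nonzero endpoint defect $\dual v_0-\dual\xi/T$ together with continuity of $t\mapsto\mathcal C_1[\hhat Q](t)$ forces nonzero defects at interior times. This is more rigorous and self-contained. It needs only the endpoint formula, not \eqref{eq:first-order-interior-defect} or $\mathrm D\exp$. If the inputs must be transforms of one curve, as in the Corollary, such a curve exists, e.g.\ the cubic $\dual Q_H$ of Section~\ref{sec:hmd-hermite}.

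Every witness must in fact have a nonzero endpoint defect. If all endpoint defects vanish at both ends, then $\hbreve q_i=\dual q_i\exp(\varepsilon\dual\xi/T)$, so $\hbreve q_0^*\hbreve q_1=\exp\dual\xi$. These are then the endpoint transforms of the constant-screw motion, which is holonomic by Corollary~\ref{cor:constant-screw}. So a counterexample with vanishing endpoint defects but a nonzero interior defect exists at no order, and you were right to abandon that search. Your $n=2$ variant would give $\mathcal C_1\equiv0$, with only $\mathcal C_2$ nonzero.

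One slip in that side remark: vanishing first-order endpoint defects give $\dual{\delta a}=\dual a\dual\xi/T-\dual\xi\dual a/T=0$, hence $\dual\eta=0$, not $\dual\eta=\dual\xi$. It is the full logarithm $\Log(\hbreve q_0^*\hbreve q_1)$ that reduces to $\dual\xi$.
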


\begin{proof}
Endpoint matching is \eqref{eq:endpoints}, an equality in $\HU$.  Interior
realizability imposes the differential identities
\eqref{eq:coefficient-test}; these are not consequences of group membership or
of two endpoint equalities.
\end{proof}

\section{Holonomic HMD--Hermite--ScLERP}
\label{sec:hmd-hermite}

The generic obstruction above is removed by interpolating logarithmic
coordinates with a Hermite polynomial and only then applying the exponential.
This uses HMD automatic differentiation to obtain all endpoint coordinate jets
without differentiating $\mathrm{dexp}$.

Keep the initial pose $\dual q_0$ fixed as an ordinary dual quaternion and define the
two HMD logarithmic endpoint data
\begin{equation}
 \hbreve X_i=\Log(\dual q_0^*\hbreve q_i),
 \qquad i=0,1.
 \label{eq:hmd-log-data}
\end{equation}
They have coefficients in the dual-quaternion Lie algebra.  Write
\begin{equation}
 \dual X_i^{[k]}=k!\coef{k}\hbreve X_i,
 \qquad k=0,\ldots,n.
 \label{eq:log-jets}
\end{equation}
In particular, $\dual X_0^{[0]}=0$ and $\dual X_1^{[0]}=\dual\xi$.

Let $\dual H_{2n+1}(s)$ be the unique dual-Lie-algebra-valued Hermite polynomial of degree
at most $2n+1$ satisfying
\begin{equation}
 \frac{d^k\dual H_{2n+1}}{ds^k}(i)=T^k\dual X_i^{[k]},
 \qquad i=0,1,\quad k=0,\ldots,n.
 \label{eq:hermite-conditions}
\end{equation}
Existence and uniqueness follow componentwise from scalar Hermite
interpolation in any basis of the finite-dimensional dual-quaternion Lie
algebra; the resulting polynomial is basis independent.

\begin{definition}[Holonomic HMD--Hermite--ScLERP]
Define
\begin{equation}
 \dual Q_H(t)=\dual q_0\exp\dual H_{2n+1}(s(t)),
 \qquad
 \hbreve Q_H(t)=e^{\varepsilon D_t}\dual Q_H(t).
 \label{eq:hmd-hermite-sclerp}
\end{equation}
\end{definition}

\begin{theorem}[Bilateral jet interpolation at any prescribed finite order]
Assume all relative endpoint jets lie in the HMD prolongation of one admissible
logarithm chart and
$\hbreve q_0,\hbreve q_1$ are compatible order-$n$ differential transforms of
unit dual-quaternion curves.  Then \eqref{eq:hmd-hermite-sclerp} is holonomic
and satisfies
\begin{equation}
 \hbreve Q_H(t_0)=\hbreve q_0,
 \qquad
 \hbreve Q_H(t_1)=\hbreve q_1.
 \label{eq:hermite-jet-match}
\end{equation}
Thus it interpolates pose and all prescribed derivatives through order $n$ at
both endpoints.
\end{theorem}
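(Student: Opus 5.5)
Holonomicity is immediate from the definition: $\hbreve Q_H(t)=e^{\varepsilon D_t}\dual Q_H(t)$ is by construction the temporal differential transform of the $C^\infty$ unit dual-quaternion curve $\dual Q_H$. Its pose projection is $\przero\hbreve Q_H=\dual Q_H$, so \eqref{eq:holonomicity} holds and all contact defects \eqref{eq:contact-defects} vanish. The substance of the proof is therefore the endpoint identity \eqref{eq:hermite-jet-match}. I would establish it by applying the differential transform to the defining formula and reading off the endpoint values of the Hermite data.

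The first step transforms the logarithmic coordinates. Set $\dual X(t)=\dual H_{2n+1}(s(t))$. Since $s$ is affine with $ds/dt=1/T$, the chain rule gives $\dual X^{(k)}(t)=T^{-k}\dual H_{2n+1}^{(k)}(s(t))$. At $t=t_i$, the Hermite conditions \eqref{eq:hermite-conditions} then give $\dual X^{(k)}(t_i)=\dual X_i^{[k]}$ for $k=0,\ldots,n$. By \eqref{eq:log-jets} this means
\begin{equation*}
 \mathcal T_n[\dual X](t_i)=\sum_{k=0}^n\frac{\varepsilon^k}{k!}\dual X_i^{[k]}=\hbreve X_i .
\end{equation*}

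The second step passes through the exponential and the left factor. Because $\dual q_0$ is constant, multiplicativity \eqref{eq:md-transform-algebra} gives $\mathcal T_n[\dual q_0\exp\dual X]=\dual q_0\,\mathcal T_n[\exp\dual X]$. The exponential rule in \eqref{eq:transform-exp-log}, which is valid for the analytic $\exp$ on the pure dual-quaternion Lie algebra and needs no chart restriction, then yields $\hbreve Q_H(t_i)=\dual q_0\exp(\hbreve X_i)$. The final step inverts the logarithm:
\begin{equation*}
 \exp(\hbreve X_i)=\exp\Log(\dual q_0^*\hbreve q_i)=\dual q_0^*\hbreve q_i ,
\end{equation*}
and left multiplication by $\dual q_0$ with $\dual q_0\dual q_0^*=1$ gives $\hbreve Q_H(t_i)=\hbreve q_i$.

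I expect the main obstacle to be justifying $\exp\Log=\mathrm{id}$ on the HMD prolongation, together with the well-definedness of $\hbreve X_i$. This is exactly where the chart hypothesis is used. Because $\hbreve q_i$ is a differential transform of a unit curve $\dual q(t)$ and $\dual q_0^*\dual q(t)$ stays in the admissible chart near $t_i$, \eqref{eq:transform-exp-log} gives
\begin{equation*}
 \hbreve X_i=\mathcal T_n[\Log(\dual q_0^*\dual q)](t_i),
\end{equation*}
so its coefficients lie in $\dual{\mathfrak u}$. Applying $\mathcal T_n$ to the ordinary identity $\exp\Log(\dual q_0^*\dual q)=\dual q_0^*\dual q$, via the composition rule \eqref{eq:md-transform-composition} and Proposition~\ref{prop:md-identity-conservation}, gives $\exp\Log=\mathrm{id}$ at the jet level, with one fixed branch. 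For a merely prescribed jet one can instead appeal to the local realizability converse in Proposition~\ref{prop:unit-hmd-jet-constraints}.

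A secondary point is that $\dual H_{2n+1}$ must take values in $\dual{\mathfrak u}$, so that $\dual Q_H$ is unit. This holds because all interpolation data lie in the linear subspace $\dual{\mathfrak u}$, and componentwise scalar Hermite interpolation in a basis of $\dual{\mathfrak u}$ keeps the polynomial there.
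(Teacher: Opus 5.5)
Your proposal is correct and follows the paper's proof. The HMD logarithmic data are the temporal jets of the coordinate curve $\Log(\dual q_0^*\dual q)$, the Hermite conditions with the scaling $d/dt=T^{-1}d/ds$ match these jets, the analytic lift $\dual X\mapsto\dual q_0\exp\dual X$ carries the jet equality to \eqref{eq:hermite-jet-match}, and holonomicity holds by definition. You also make explicit two points the paper leaves implicit: the jet-level identity $\exp\circ\Log=\mathrm{id}$ on the prolonged chart, and the fact that $\dual H_{2n+1}$ stays in $\dual{\mathfrak u}$, so that $\dual Q_H$ is unit.
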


\begin{proof}
By the functional calculus property of the differential transform,
\eqref{eq:hmd-log-data} is the order-$n$ differential transform of the local
coordinate curve $\dual X=\Log(\dual q_0^*\dual q)$.  Hence
\eqref{eq:log-jets} are precisely its endpoint derivatives.  The Hermite
conditions \eqref{eq:hermite-conditions} give equality of the coordinate jets.
Applying the analytic map $\dual X\mapsto \dual q_0\exp\dual X$ preserves equality of order-$n$
jets after the chain-rule scaling $d/dt=T^{-1}d/ds$.  Therefore
\eqref{eq:hermite-jet-match} holds.  Temporal holonomicity follows directly from
the definition $\hbreve Q_H=e^{\varepsilon D_t}\dual Q_H$.
\end{proof}

\begin{remark}[Compatibility of endpoint jets]
Compatibility means that each $\hbreve q_i$ is the temporal jet of a local
$C^n$ curve in $\DQ$.  Its relative coefficients therefore satisfy the
explicit relations \eqref{eq:relative-descriptor-constraints}--
\eqref{eq:relative-descriptor-recursion}.  Conversely, under the admissible
chart hypothesis in Proposition~\ref{prop:unit-hmd-jet-constraints}, a unit
HMD endpoint is locally realizable by a smooth unit dual-quaternion curve.
The unit relations concern each endpoint jet and should not be confused with
interior holonomicity of an interpolating HMD curve.
\end{remark}

\begin{proposition}[Left equivariance of the Hermite construction]
For a constant $\dual p\in\DQ$, simultaneous left multiplication of all endpoint
data by $\dual p$ produces $\dual{\widetilde Q}_H(t)=\dual p\dual Q_H(t)$.
\end{proposition}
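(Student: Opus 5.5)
The plan is to follow the data through the three stages of the construction: the HMD logarithmic endpoint data \eqref{eq:hmd-log-data}, their jets \eqref{eq:log-jets}, and the Hermite polynomial \eqref{eq:hermite-conditions}. Replacing $\hbreve q_i$ by $\dual p\hbreve q_i$ means $\dual q_0\mapsto\dual p\dual q_0$ for the fixed base pose, since $\przero(\dual p\hbreve q_0)=\dual p\dual q_0$. The first step is then to verify that the relative element entering the logarithm is unchanged. Quaternion conjugation reverses products and $\dual p^*\dual p=1$, so
\begin{equation*}
 (\dual p\dual q_0)^*(\dual p\hbreve q_i)
 =\dual q_0^*\dual p^*\dual p\hbreve q_i
 =\dual q_0^*\hbreve q_i,
 \qquad i=0,1 .
\end{equation*}
This is the same cancellation already used in the left-equivariance part of the group-interpolation proposition for HMD--ScLERP.

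Because the argument of $\Log$ is literally the same HMD quaternion, it lies in the same admissible chart. With the branch fixed as in Definition~\ref{def:log-chart}, the transformed data $\widetilde{\hbreve X}_i$ coincide with $\hbreve X_i$. Consequently all coefficient jets $\dual X_i^{[k]}$ are unchanged. The Hermite conditions \eqref{eq:hermite-conditions} are therefore identical, and uniqueness of the degree-$(2n+1)$ Hermite polynomial gives $\widetilde{\dual H}_{2n+1}=\dual H_{2n+1}$. The period $T$ and the parametrization $s(t)$ are also untouched.

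Substituting into \eqref{eq:hmd-hermite-sclerp} gives
\begin{equation*}
 \widetilde{\dual Q}_H(t)=(\dual p\dual q_0)\exp\dual H_{2n+1}(s(t))=\dual p\,\dual Q_H(t).
\end{equation*}
As a corollary, since $\dual p$ is constant, multiplicativity of $\mathcal T_n$ gives $\widetilde{\hbreve Q}_H=\dual p\,\hbreve Q_H$, so the HMD jet is equivariant as well.

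The only delicate point, and the one I expect to be the main obstacle, is the chart and sign bookkeeping. One must make sure that the base pose used in \eqref{eq:hmd-log-data} transforms consistently, choosing the representative $\dual p\dual q_0$ rather than $-\dual p\dual q_0$. The cancellation above shows that the relative element is independent of $\dual p$, so the chart hypothesis is preserved automatically and no extra assumption on $\dual p$ is needed.
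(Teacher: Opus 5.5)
Your proof is correct and follows the paper's own argument: the relative data $(\dual p\dual q_0)^*(\dual p\hbreve q_i)=\dual q_0^*\hbreve q_i$ are unchanged, so the logarithmic jets and the Hermite polynomial are unchanged, and only the constant left factor becomes $\dual p\dual q_0$. Your additional remarks (uniqueness of the Hermite polynomial, invariance of $T$ and $s(t)$, preservation of the chart, and the corollary $\widetilde{\hbreve Q}_H=\dual p\,\hbreve Q_H$ via multiplicativity of $\mathcal T_n$) are valid refinements of that argument.
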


\begin{proof}
The relative data are unchanged because
$(\dual p\dual q_0)^*(\dual p\hbreve q_i)=\dual q_0^*\hbreve q_i$.  Thus the Hermite polynomial is
unchanged and only its constant left factor becomes $\dual p\dual q_0$.
\end{proof}

\begin{remark}[Dependence on logarithmic coordinates]
Unlike the direct constant-generator ScLERP, the Hermite construction is not
generally invariant under a change of logarithm center, symmetric under
endpoint reversal, or geodesic.  These are the costs of matching arbitrary
bilateral jets in one selected coordinate chart.
\end{remark}

\begin{remark}[No explicit Lie-bracket recurrence]
The coefficients $\dual X_i^{[k]}$ are extracted from the finite HMD logarithms in
\eqref{eq:hmd-log-data}.  Noncommutative contributions are produced internally
by HMD arithmetic.  The construction therefore avoids explicit recursive
formulas for derivatives of $\mathrm{dexp}$, although such formulas can be used
for an independent equivalence proof.
\end{remark}

\subsection{Explicit first- and second-order formulas}

For $n=1$, expand the logarithmic endpoint transforms as
\begin{equation}
 \hbreve X_i=\dual X_i+\varepsilon\dual V_i,
 \qquad
 \dual X_i=\coef{0}\hbreve X_i,
 \quad \dual V_i=\coef{1}\hbreve X_i,
 \qquad i=0,1.
 \label{eq:first-order-log-data}
\end{equation}
Here $\dual X_0=0$ and $\dual X_1=\dual\xi$.  The cubic Hermite coordinate polynomial is
written with the normalized-time data
\begin{equation}
 \dual{\overline V}_i=T\dual V_i.
 \label{eq:scaled-first-order-data}
\end{equation}
It is
\begin{equation}
 \dual H_3(s)=h_{00}^{(3)}(s)\dual X_0+h_{10}^{(3)}(s)\dual{\overline V}_0
       +h_{01}^{(3)}(s)\dual X_1+h_{11}^{(3)}(s)\dual{\overline V}_1,
 \label{eq:cubic-hermite}
\end{equation}
where
\begin{align}
 h_{00}^{(3)}(s)&=2s^3-3s^2+1,
 &h_{10}^{(3)}(s)&=s^3-2s^2+s,\notag\\
 h_{01}^{(3)}(s)&=-2s^3+3s^2,
 &h_{11}^{(3)}(s)&=s^3-s^2.
 \label{eq:cubic-basis}
\end{align}
Consequently, the simplified first-order interpolant is
\begin{equation}
 \boxed{
 \dual H_3(s)=T(s^3-2s^2+s)\dual V_0+(-2s^3+3s^2)\dual\xi
       +T(s^3-s^2)\dual V_1,
 \qquad \dual Q_H(t)=\dual q_0\exp\dual H_3(s(t)).}
 \label{eq:first-order-explicit}
\end{equation}
It satisfies $\dual Q_H(t_0)=\dual q_0$, $\dual Q_H(t_1)=\dual q_1$ and matches both first-order
differential transforms.

For $n=2$, use the factorial convention
\begin{equation}
 \hbreve X_i=\dual X_i+\varepsilon\dual V_i+\frac{\varepsilon^2}{2}\dual A_i,
 \qquad
 \dual V_i=\coef{1}\hbreve X_i,
 \quad \dual A_i=2\coef{2}\hbreve X_i.
 \label{eq:second-order-log-data}
\end{equation}
Define the normalized-time acceleration data
\begin{equation}
 \dual{\overline V}_i=T\dual V_i,
 \qquad \dual{\overline A}_i=T^2\dual A_i.
 \label{eq:scaled-second-order-data}
\end{equation}
The quintic Hermite polynomial is
\begin{align}
 \dual H_5(s)={}&h_{00}^{(5)}(s)\dual X_0+h_{10}^{(5)}(s)\dual{\overline V}_0
           +h_{20}^{(5)}(s)\dual{\overline A}_0 \notag\\
          &+h_{01}^{(5)}(s)\dual X_1+h_{11}^{(5)}(s)\dual{\overline V}_1
           +h_{21}^{(5)}(s)\dual{\overline A}_1,
 \label{eq:quintic-hermite}
\end{align}
with basis functions
\begin{align}
 h_{00}^{(5)}(s)&=1-10s^3+15s^4-6s^5,
 &h_{01}^{(5)}(s)&=10s^3-15s^4+6s^5,\notag\\
 h_{10}^{(5)}(s)&=s-6s^3+8s^4-3s^5,
 &h_{11}^{(5)}(s)&=-4s^3+7s^4-3s^5,\notag\\
 h_{20}^{(5)}(s)&=\tfrac12(s^2-3s^3+3s^4-s^5),
 &h_{21}^{(5)}(s)&=\tfrac12(s^3-2s^4+s^5).
 \label{eq:quintic-basis}
\end{align}
Since $\dual X_0=0$ and $\dual X_1=\dual\xi$, the explicit second-order formula becomes
\begin{equation}
 \boxed{\begin{aligned}
 \dual H_5(s)={}&T(s-6s^3+8s^4-3s^5)\dual V_0\\
 &+\tfrac{T^2}{2}(s^2-3s^3+3s^4-s^5)\dual A_0\\
 &+(10s^3-15s^4+6s^5)\dual\xi\\
 &+T(-4s^3+7s^4-3s^5)\dual V_1\\
 &+\tfrac{T^2}{2}(s^3-2s^4+s^5)\dual A_1,
 \qquad \dual Q_H(t)=\dual q_0\exp\dual H_5(s(t)).
 \end{aligned}}
 \label{eq:second-order-explicit}
\end{equation}
Direct differentiation verifies
\begin{equation}
 \dual H_5(0)=0,\quad \dual H_5(1)=\dual\xi,\quad
 \dual H_5'(i)=T\dual V_i,\quad \dual H_5''(i)=T^2\dual A_i,
 \qquad i=0,1.
 \label{eq:quintic-checks}
\end{equation}
Because the endpoint quantities $\dual V_i$ and $\dual A_i$ are extracted from the HMD
logarithms rather than identified with physical twists and accelerations by
hand, all noncommutative conversion terms are already included.

\section{Recovery of higher-order acceleration fields}

For a genuine HMD differential transform $\hbreve q=e^{\varepsilon D_t}\dual q$, define
the HMD unit quaternion
\begin{equation}
 \hhat\varphi(t)=\hbreve q(t)\dual q(t)^*.
 \label{eq:field-descriptor}
\end{equation}
This notation reflects the two different objects in the product: $\hbreve q$
is the transformed motion, while $\dual q=\przero\hbreve q$ is its pose component.
The decomposition established in \cite{Condurache2025} is
\begin{equation}
 \hhat\varphi
 =\left(1+\frac{\varepsilon_0}{2}\widehat a\right)\widehat\phi,
 \qquad
\widehat a
 =2\frac{\partial\hhat\varphi}{\partial\varepsilon_0}\widehat\phi^*.
 \label{eq:unique-decomposition}
\end{equation}
The algebraic derivative with respect to the dual generator is the coefficient
extraction operation
\begin{equation}
 \frac{\partial}{\partial\varepsilon_0}
 (a+\varepsilon_0b)=b.
 \label{eq:dual-generator-derivative}
\end{equation}
For every body point represented by $r\in\mathbb R^3$, its HMD field is
\begin{equation}
 \widehat a_r
 =\widehat a+\Ad_{\widehat\phi}r
 =\widehat a+\widehat\phi r\widehat\phi^*.
 \label{eq:point-field}
\end{equation}
Equations \eqref{eq:field-descriptor}--\eqref{eq:point-field} identify, in one
object, the velocity, acceleration, jerk, and higher-order acceleration fields.
In particular,
\begin{equation}
 a_r^{[k]}=k!\,\coef{k}\widehat a_r,
 \qquad k=1,\ldots,n,
 \label{eq:coefficient-fields}
\end{equation}
under the coefficient convention of
\eqref{eq:differential-transform}.

No explicit $\mathrm{dexp}$ derivative or expanded Lie-bracket recurrence is
required to evaluate these quantities.  The noncommutative terms are retained
automatically by HMD quaternion multiplication.  Lie brackets may still be
used to prove equivalence with exponential-coordinate formulas, but they are
not needed in the HMD computational pipeline.

If the interpolant $\hhat Q(s(t))$ is proved temporally holonomic, then
\begin{equation}
 \hhat\varphi(t)=\hbreve Q(t)\dual Q(t)^*
 \label{eq:interpolated-fields}
\end{equation}
is the field descriptor of the interpolated rigid motion.  Without holonomicity,
$\hhat Q(s(t))\dual Q(t)^*$ remains a well-defined HMD quaternion but must not be
identified automatically with the acceleration fields of the time motion
$\dual Q(t)$.

\section{Reduction properties and numerical validation}

The algebraic construction has the expected reductions.  Removing the
multidual unit $\varepsilon$ gives ordinary ScLERP of unit dual quaternions;
removing also the dual unit $\varepsilon_0$ gives SLERP of unit quaternions.
Thus these constructions are related by reductions, not by set inclusions:
\begin{equation}
 \mathrm{HMD\!\text{-}ScLERP}
 \xrightarrow{\,\varepsilon=0\,}\mathrm{ScLERP}
 \xrightarrow{\,\varepsilon_0=0\,}\mathrm{SLERP}.
\end{equation}

For a reproducible noncommuting test, take the rotation-only subgroup,
$T=1$, $\dual q_0=1$, $\dual q_1=\exp(0.6\mathbf k)$, and prescribe
\begin{equation}
 \dual v_0=\mathbf i,\qquad \dual v_1=\dual q_1\mathbf j.
 \label{eq:numerical-data}
\end{equation}
These are valid tangent vectors at the two unit quaternions but are
incompatible with the constant generator $\dual\xi=0.6\mathbf k$.  Evaluation
of \eqref{eq:first-order-interior-defect} at
$s\in\{0,0.25,0.5,0.75,1\}$ yields the values in
Table~\ref{tab:first-order-contact-defect}: the endpoint defects reproduce
\eqref{eq:left-endpoint-defect}--\eqref{eq:right-endpoint-defect}, and the
interior defect is strictly positive because $\mathbf i$ and $\mathbf j$ do
not commute with $\mathbf k$.
\begin{table}[ht]
\centering
\caption{First-order contact defect for the noncommuting rotation-only test.
The norm is the Euclidean norm of the four real quaternion coefficients.}
\label{tab:first-order-contact-defect}
\begin{tabular}{c@{\qquad}ccccc}
\toprule
$s$ & $0$ & $0.25$ & $0.50$ & $0.75$ & $1$\\
\midrule
$\|\mathcal C_1\|$ & $1.166190$ & $0.890634$ & $0.773633$ & $0.890634$ & $1.166190$\\
\bottomrule
\end{tabular}
\end{table}
Thus exact endpoint interpolation in $\hhat U$ does not imply contact with
the pose projection in the interior.  The symmetry of the tabulated values about $s=1/2$ is a consequence
of the particular symmetric data chosen here and is not a general property of
the contact defect.  Using the same endpoint logarithmic jets in the cubic construction
\eqref{eq:first-order-explicit}, one verifies the substantive endpoint
property: the numerical time derivative
$d\dual Q_H/dt$ evaluated at $t=0$ reproduces $\dual v_0$ to $1.5\times10^{-14}$,
and at $t=T$ reproduces $\dual v_1$ to $7.9\times10^{-10}$
(central-difference error, script \texttt{verify\_hermite\_endpoints.py}).
By construction, $\mathcal C_1\equiv0$ throughout the interior because
$\dual Q_H(t)$ is itself a differential transform; this identity is not a
numerical test but a consequence of the Hermite lift.
The rotation-only example is a
subgroup test of the full dual-quaternion construction and is noncommutative
because the prescribed $\mathbf i$ and $\mathbf j$ directions do not commute
with the $\mathbf k$ generator.

We repeat the test on a proper screw motion in $SE(3)$.  With $T=1$, the
endpoints are $\dual q_0=1$ and
\begin{equation}
 \dual q_1=\cos(0.3)+\sin(0.3)\mathbf k
     +\varepsilon_0\bigl(-0.15\sin(0.3)+0.15\cos(0.3)\mathbf k\bigr),
\end{equation}
i.e.\ a screw with rotation angle $0.6$ about $\mathbf k$ combined with
translation $0.3$ along $\mathbf k$.  The prescribed endpoint jets are
\begin{equation}
 \dual v_0=\mathbf i+\varepsilon_0(0.2\mathbf i),\qquad
 \dual v_1=\dual q_1\bigl(\mathbf j+\varepsilon_0(0.1\mathbf k)\bigr),
 \label{eq:screw-numerical-data}
\end{equation}
valid tangent vectors at $\dual q_0$ and $\dual q_1$ that combine
noncommuting rotational directions with independent translational
velocities.  Solving
$\bigl(D\exp_{\dual\xi}\bigr)\dual\eta=\dual\delta a$ for the pure dual
quaternion $\dual\eta$ in \eqref{eq:first-order-interior-defect} yields the
values in Table~\ref{tab:screw-contact-defect}.  The endpoint defects reproduce
\eqref{eq:left-endpoint-defect}--\eqref{eq:right-endpoint-defect}, and the
tabulated values are no longer symmetric because $\dual v_0$ and $\dual v_1$
are not related by the discrete symmetry of the rotation-only test.

\begin{table}[ht]
\centering
\caption{First-order contact defect for the screw-motion test
\eqref{eq:screw-numerical-data}.  Norms are Euclidean norms on the eight
real coefficients of the dual quaternion.}
\label{tab:screw-contact-defect}
\begin{tabular}{c@{\qquad}cccccc}
\toprule
$s$ & $0$ & $0.2$ & $0.4$ & $0.6$ & $0.8$ & $1$\\
\midrule
$\|\mathcal C_1\|$ & $1.073546$ & $0.846771$ & $0.702285$ & $0.696668$ & $0.833194$ & $1.056894$\\
\bottomrule
\end{tabular}
\end{table}

Substituting the same endpoint dual-quaternion jets
\eqref{eq:screw-numerical-data} in \eqref{eq:first-order-explicit} produces
a pose curve $\dual Q_H(t)$ whose temporal prolongation is, by construction,
$e^{\varepsilon D_t}\dual Q_H(t)$; hence $\mathcal C_1\equiv0$ identically.
This statement is the holonomicity identity proved above, not a floating-point
estimate.  The interior of the direct HMD--ScLERP lift and its pose projection
therefore remain out of contact for screw motions as well, whereas the cubic
construction restores holonomicity in either case.  As a non-tautological check,
the script \texttt{verify\_hermite\_se3.py} compares the endpoint derivatives
of the resulting cubic pose curve with those of compatible local endpoint
curves in $SE(3)$.  The four pose-and-velocity residuals are respectively
$0$, $4.90\times10^{-14}$, $0$, and $4.20\times10^{-13}$.

\paragraph{Second-order (quintic) verification.}
For $n=2$ the same screw endpoints and logarithmic velocity data are augmented
with the pure dual-quaternion logarithmic accelerations
\begin{equation}
 \dual A_0=0.5\mathbf j+\varepsilon_0(0.1\mathbf i),\qquad
 \dual A_1=0.3\mathbf i+\varepsilon_0(0.1\mathbf j),
 \label{eq:screw-quintic-accel}
\end{equation}
with $T=1$.  To obtain compatible physical endpoint jets, define local
coordinate curves
\begin{equation}
 \dual q_i^{\rm loc}(\tau)=\dual q_0\exp\!\left(
 \dual X_i+\tau\dual V_i+\frac{\tau^2}{2}\dual A_i\right),
 \qquad i=0,1,
 \label{eq:local-compatible-jets}
\end{equation}
where $\dual V_i$ are the logarithmic velocities extracted from the first-order
data \eqref{eq:screw-numerical-data}.  The physical target derivatives are
defined by differentiating \eqref{eq:local-compatible-jets} at $\tau=0$.
The independently evaluated quintic curve
$\dual Q_H(t)=\dual q_0\exp\dual H_5(t)$ is then compared with these targets.
Five-point centered formulas with step $h=2\times10^{-4}$ are used for the
first and second derivatives;
the resulting residuals are shown in Table~\ref{tab:quintic-check}.
\begin{table}[ht]
\centering
\caption{Full quintic $SE(3)$ endpoint-jet check on the screw data
\eqref{eq:screw-numerical-data} and logarithmic accelerations
\eqref{eq:screw-quintic-accel}. Residuals are Euclidean norms on the eight
real coefficients of the dual quaternion.}
\label{tab:quintic-check}
\begin{tabular}{lc}
\toprule
Identity & Residual\\
\midrule
$\|\dual Q_H(0)-\dual q_0^{\rm loc}(0)\|$ & $0$\\
$\|\dot{\dual Q}_H(0)-\dot{\dual q}_0^{\rm loc}(0)\|$ & $5.11\times10^{-14}$\\
$\|\ddot{\dual Q}_H(0)-\ddot{\dual q}_0^{\rm loc}(0)\|$ & $2.31\times10^{-10}$\\
$\|\dual Q_H(1)-\dual q_1^{\rm loc}(0)\|$ & $0$\\
$\|\dot{\dual Q}_H(1)-\dot{\dual q}_1^{\rm loc}(0)\|$ & $5.17\times10^{-12}$\\
$\|\ddot{\dual Q}_H(1)-\ddot{\dual q}_1^{\rm loc}(0)\|$ & $2.67\times10^{-8}$\\
\bottomrule
\end{tabular}
\end{table}

\begin{figure}[ht]
\centering
\includegraphics[width=0.85\textwidth]{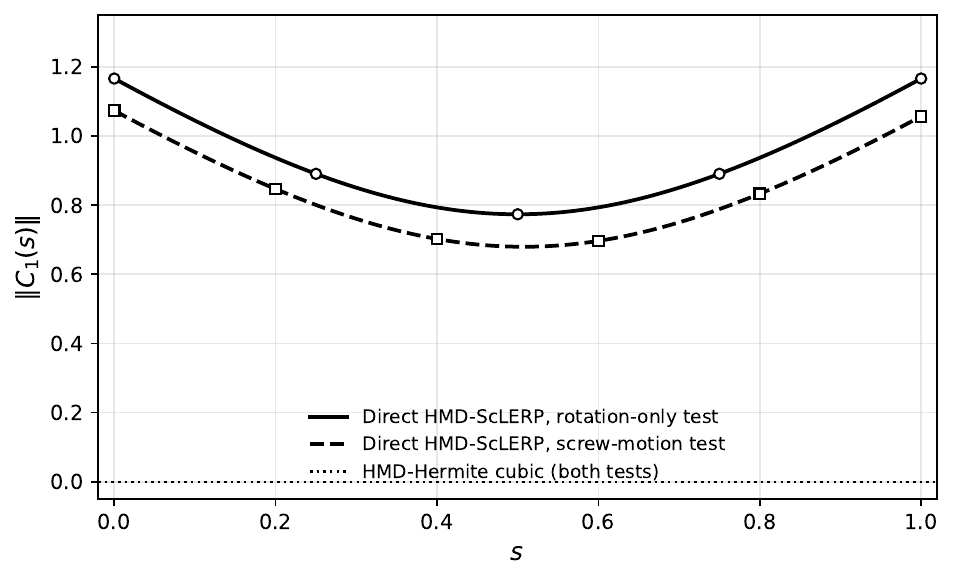}
\caption{First-order contact defect $\|\mathcal C_1(s)\|$ for the direct
HMD--ScLERP on the rotation-only test (solid) and the screw-motion test
(dashed).  Markers coincide with the sampled values of
Tables~\ref{tab:first-order-contact-defect}--\ref{tab:screw-contact-defect}.
The dotted line at $0$ is the profile of the cubic HMD--Hermite construction
\eqref{eq:first-order-explicit} in either case.}
\label{fig:contact-defect}
\end{figure}

\FloatBarrier

\begin{remark}[Multi-segment extension and cost]
The construction applies segmentwise to a knot sequence by matching the same
physical-time jets at every common node and selecting coherent logarithm
charts on adjacent segments.  Direct truncated Cauchy multiplication through
order $n$ costs $O(n^2)$ coefficient products; dualization changes the
constant factor, not this order.  The same truncated-product implementation
is used at every finite order $n$.
\end{remark}

\begin{remark}[Verification at higher orders]
Beyond the quintic ($n=2$) construction of
Table~\ref{tab:quintic-check}, the same Hermite algebra has been executed at
order $n=3$, producing the degree-$7$ septic basis.  A componentwise scalar
test of the eight endpoint identities
$H_7^{(m)}(i)=T^m\dual X_i^{[m]}$, $i\in\{0,1\}$, $m\in\{0,1,2,3\}$,
gives a maximum double-precision residual of $1.44\times10^{-12}$.  Since
these interpolation conditions act componentwise on the dual-quaternion
coefficients, the computation checks the polynomial implementation at $n=3$.
Extension to any prescribed finite order follows from the preceding theorem,
not from this numerical experiment.  Explicit septic coefficients and
residuals are provided in the accompanying implementation.
\end{remark}

\begin{remark}[Conditioning of the monomial basis at high order]
The condition number of the boundary matrix associated with the monomial
(Taylor) basis increases rapidly with $n$.  Numerically,
$\kappa\bigl(M_n\bigr)$ is $2.4\!\cdot\!10^{1}$, $7.6\!\cdot\!10^{2}$,
$4.8\!\cdot\!10^{4}$, $4.4\!\cdot\!10^{6}$, $5.5\!\cdot\!10^{8}$
for $n=1,\dots,5$ (script \texttt{verify\_higher\_order.py}).
Solving for the basis in double precision produces residuals
$0$, $10^{-14}$, $10^{-13}$, $10^{-11}$, and approximately
$1.9\times10^{-9}$ on the same range.  These values document the progressive
loss of numerical accuracy in the monomial basis; whether it remains adequate
at a given order depends on data scaling and the required tolerance.
Alternative polynomial bases may improve the conditioning, but their
quantitative assessment is outside the present validation.  Such a change of
basis would not alter the HMD Cauchy-product algebra.
\end{remark}

\section{Discussion}

ScLERP extends verbatim to the Lie group of unit HMD quaternions, but the direct
lift is generically non-holonomic.  Its endpoint coefficients prescribe
arbitrary velocities and accelerations, whereas its pose projection has one
constant generator $\dual\xi$.  The endpoint defects make this mismatch explicit.
The Hermite construction resolves it at the correct level: logarithmic endpoint
jets are obtained by HMD functional calculus, interpolated in the Lie algebra,
and lifted back to the group.  This separates the useful algebraic HMD--ScLERP
from the holonomic trajectory interpolant needed in mechanics.

\begin{remark}[Invariant tensorial prolongation]
\label{rem:park-ravani}
The invariant rotation splines of Park and Ravani \cite{ParkRavani1997}
admit a natural MD counterpart.  If $R_{\rm PR}(t)\in SO(3)$ is such a spline
and is $C^n$, then its temporal MD prolongation
\begin{equation}
 \breve R_{\rm PR}(t)
 =e^{\varepsilon D_t}R_{\rm PR}(t)
 =\sum_{k=0}^n\frac{\varepsilon^k}{k!}R_{\rm PR}^{(k)}(t)
 \label{eq:park-ravani-md-prolongation}
\end{equation}
is holonomic by construction, remains orthogonal by
Proposition~\ref{prop:hmd-orthogonal-preservation}, and satisfies the
equivariance
\begin{equation}
 e^{\varepsilon D_t}\bigl(P R_{\rm PR}(t)Q\bigr)
 =P\,\breve R_{\rm PR}(t)\,Q,
 \qquad P,Q\in SO(3)\text{ constant},
 \label{eq:park-ravani-md-equivariance}
\end{equation}
since $P$ and $Q$ are constant in time and $e^{\varepsilon D_t}$ acts only on
the temporal argument.  Thus the MD prolongation commutes with the fixed- and
moving-frame actions underlying the Park--Ravani bi-invariance, in the sense
that it is equivariant under constant left and right multiplications; this is
weaker than a full geometric bi-invariance of an interpolation scheme, which
is not claimed here.

In the dual orthogonal-tensor representation of rigid motion, the
coefficientwise argument extends via
Proposition~\ref{prop:hmd-orthogonal-preservation} and
identity~\eqref{eq:descriptor-commuting-identity} to an HMD transform
$\hbreve R=e^{\varepsilon D_t}\dual R$ with the holonomic descriptor
$\hhat\Psi=\hbreve R\dual R^T$ of \cite{Condurache2025}.  One should recall
that $SE(3)$ admits no bi-invariant metric, so on the dual level the
equivariance is naturally weaker than on $SO(3)$: it holds for constant left
and right actions, but not for a full bi-invariant geometric structure.  This
suggests a tensorial counterpart of the HMD--Hermite construction and a
representation-independence result through the HMD quaternion--tensor
covering.

The observation does not, however, license direct interpolation of arbitrary
HMD tensor knots.  Without compatibility with a common temporal source, an
interpolated HMD curve reproduces the obstruction of
Theorem~\ref{thm:endpoint-defects-non-holonomicity} at the tensor level.
Since the original Park--Ravani spline is $C^2$, its immediate temporal
prolongation supplies jets through order two.  For $n\ge3$, that immediate
prolongation of the original $C^2$ construction no longer suffices; the
HMD--Hermite construction of Section~\ref{sec:hmd-hermite} provides one
arbitrary-order alternative.  The two viewpoints are therefore complementary:
the Park--Ravani spline supplies a geometric base whose available temporal
prolongation is automatically holonomic, whereas HMD--Hermite directly matches
prescribed compatible endpoint jets at any finite order.
\end{remark}

\subsection{Applicability}
The HMD--Hermite--ScLERP construction targets problems in robot
programming and spacecraft guidance in which endpoint pose, twist, and
acceleration (and, at higher $n$, jerk and snap) are prescribed and the
interpolant must be a genuine rigid-motion history rather than a
group-valued curve of unrelated jets.  Typical instances are: online trajectory blending between
identified waypoints of an industrial manipulator, where endpoint
acceleration continuity is required for smooth torque profiles;
docking and rendezvous maneuvers with prescribed relative twist and
acceleration at contact; and animation or keyframing pipelines that
consume velocity-plus-acceleration data at each key.  In every case the input is a
pair of order-$n$ HMD dual quaternions, and the output is a $C^\infty$
curve $\dual Q_H(t)$ whose successive time derivatives recover the
prescribed data by finite algebra.

\subsection{Limitations}
The construction is local in three related senses.  First, the logarithm
chart of Definition~\ref{def:log-chart} excludes the rotation cut locus; if
the interpolated screw crosses $\theta_r=\pi$, the segment must be split
and a coherent branch selected on each subsegment.  Second, the Hermite
polynomial $\dual H_{2n+1}$ interpolates in Lie-algebra coordinates only,
so invariants such as constant screw axis or constant screw pitch are not
preserved in general: only screw-preserving inputs
(Corollary~\ref{cor:constant-screw}) produce a screw-invariant output.  Third, the arithmetic
cost of a single evaluation step is $\Theta(n^2)$ in the truncated Cauchy
product; for very large $n$ (well beyond typical mechanical orders of jerk
and snap), the memory footprint of the multidual coefficients dominates.
No claim of geometric bi-invariance beyond the constant left/right
equivariance of Remark~\ref{rem:park-ravani} is made. Finally, when the
endpoint jets are supplied by sampling a motion whose logarithmic coordinate
curve $\dual X$ belongs to $C^{2n+2}$ and has bounded derivative
$\dual X^{(2n+2)}$ on $[0,T]$, the standard bilateral Hermite remainder gives
an interior approximation error $O(T^{2n+2})$ in logarithmic coordinates.
A quantitative estimate tailored to the exponential lift is not proved here
and is left open.
Background on Hermite consistency and higher-order rigid-motion sampling
is in the classical treatments cited above.
A numerical cross-check against an independent $d\exp$/BCH implementation at
$n\geq 2$ would provide a useful implementation benchmark; such a comparison
is beyond the present scope.

\subsection{Extensions}
Three extensions are natural.  A multi-segment variant of
Section~\ref{sec:hmd-hermite} matches the same physical-time jets at every
internal node and produces a globally $C^n$ rigid-motion spline; the
construction is segmentwise, the chart selection is inherited from the
pose projection, and the resulting complexity is linear in the number of
segments.  A tensorial counterpart, obtained by pulling the Hermite
polynomial through the HMD quaternion--tensor covering, provides an
equivalent HMD--Hermite interpolation directly in $\dual{SO}_3$ and is
interesting when the ambient computational infrastructure is
tensor-based rather than quaternionic.  Finally, the same HMD apparatus
applies without change to constrained subgroups relevant to specific
mechanisms (e.g.\ Sch\"onflies motions $X(\mathbf a)\subset SE(3)$ for
pick-and-place manipulators),
by restricting the endpoint logarithms to the corresponding subalgebra.
These extensions preserve the central property emphasized throughout: the
interpolation is completed within the finite HMD product algebra, without
invoking iterated Lie brackets or Bernoulli series.

\section{Conclusion}

We introduced HMD--ScLERP as a group interpolation and proved that its direct
use with arbitrary endpoint transforms is generically non-holonomic.  The
distinction between $\hhat q$ and $\hbreve q$ prevents a generic HMD curve from
being mistaken for a differential transform.  We also characterized unit HMD
differential transforms by explicit binomial constraints at every order and
showed that the relative descriptor $\hbreve q\dual q^*$ is also unitary.  Its
coefficient identities determine recursively the conjugate-symmetric parts of
the higher-order relative fields from lower-order data.  A holonomic
HMD--Hermite--ScLERP was then obtained by Hermite interpolation of the HMD
logarithmic endpoint jets.  It matches bilateral data of arbitrary finite order
without explicit $\mathrm{dexp}$ differentiation.  For the resulting curve,
$\hbreve Q_H(t)\dual Q_H(t)^*$ yields all higher-order acceleration fields, with
noncommutative effects handled internally by HMD quaternion arithmetic.

\small\sloppy\raggedright

\fussy\normalsize


\begin{thebibliography}{99}

\bibitem{ParkRavani1997}
F. C. Park and B. Ravani, ``Smooth invariant interpolation of rotations,''
\emph{ACM Transactions on Graphics}, vol.~16, no.~3, pp.~277--295, 1997.
doi:10.1145/256157.256160.

\bibitem{Selig2005}
J. M. Selig, \emph{Geometric Fundamentals of Robotics}, 2nd ed.,
Springer, New York, 2005. doi:10.1007/b138859.

\bibitem{KavanEtAl2008}
L. Kavan, S. Collins, J. \v{Z}\'ara, and C. O'Sullivan,
``Geometric skinning with approximate dual quaternion blending,''
\emph{ACM Transactions on Graphics}, vol.~27, no.~4, article 105,
23 pages, 2008. doi:10.1145/1409625.1409627.

\bibitem{ZefranKumar1998}
M. \v{Z}efran and V. Kumar, ``Interpolation schemes for rigid body motions,''
\emph{Computer-Aided Design}, vol.~30, no.~3, pp.~179--189, 1998.
doi:10.1016/S0010-4485(97)00060-2.

\bibitem{Muller2014}
A. M\"uller, ``Higher derivatives of the kinematic mapping and some
applications,'' \emph{Mechanism and Machine Theory}, vol.~76, pp.~70--85,
2014. doi:10.1016/j.mechmachtheory.2014.01.007.

\bibitem{Muller2019}
A. M\"uller, ``An overview of formulae for the higher-order kinematics of
lower-pair chains with applications in robotics and mechanism theory,''
\emph{Mechanism and Machine Theory}, vol.~142, 103594, 2019.
doi:10.1016/j.mechmachtheory.2019.103594.

\bibitem{Messelmi2015}
F. Messelmi, ``Multidual numbers and their multidual functions,''
\emph{Electronic Journal of Mathematical Analysis and Applications},
vol.~3, no.~2, pp.~154--172, 2015.

\bibitem{FikeAlonso2012}
J. A. Fike and J. J. Alonso, ``Automatic differentiation through the use of
hyper-dual numbers for second derivatives,'' in \emph{Recent Advances in
Algorithmic Differentiation}, Springer, pp.~163--173, 2012.
doi:10.1007/978-3-642-30023-3\_15.

\bibitem{Condurache2025}
D. Condurache,
``An overview of higher-order kinematics of rigid body and multibody systems
with nilpotent algebra,''
\emph{Mechanism and Machine Theory}, vol.~209, 105959, 2025.
doi:10.1016/j.mechmachtheory.2025.105959.

\bibitem{KimKimShin1995}
M.-J. Kim, M.-S. Kim, and S. Y. Shin,
``A general construction scheme for unit quaternion curves with simple
high order derivatives,'' in \emph{Proc.\ SIGGRAPH '95}, ACM, pp.~369--376,
1995. doi:10.1145/218380.218486.

\bibitem{BeltaKumar2002}
C. Belta and V. Kumar,
``An SVD-based projection method for interpolation on $SE(3)$,''
\emph{IEEE Transactions on Robotics and Automation}, vol.~18, no.~3,
pp.~334--345, 2002. doi:10.1109/TRA.2002.1019463.

\bibitem{Zimmermann2020}
R. Zimmermann, ``Hermite interpolation and data processing errors on
Riemannian matrix manifolds,'' \emph{SIAM Journal on Scientific Computing},
vol.~42, no.~5, pp.~A2593--A2619, 2020.
doi:10.1137/19M1282878.

\bibitem{RodriguesEtAl2005}
R. C. Rodrigues, F. Silva Leite, and J. Jakubiak,
``A new geometric algorithm to generate smooth interpolating curves on
Riemannian manifolds,'' \emph{LMS Journal of Computation and Mathematics},
vol.~8, pp.~251--266, 2005. doi:10.1112/S146115700000098X.

\bibitem{Muller2025}
A. M\"uller,
``Geometric interpolation of rigid body motions,''
arXiv preprint arXiv:2509.16966, 2025.

\bibitem{Saunders1989}
D. J. Saunders, \emph{The Geometry of Jet Bundles}, Cambridge University
Press, Cambridge, 1989. doi:10.1017/CBO9780511526411.

\end{thebibliography}
\end{document}